\documentclass{article}

\usepackage{microtype}
\usepackage{graphicx}
\usepackage{subcaption}
\usepackage{booktabs}
\usepackage{float}

\usepackage{xcolor}
\usepackage[
  colorlinks = true,
  linkcolor = blue,
  urlcolor  = blue,
  citecolor = blue,
  anchorcolor = blue,
]{hyperref}
\usepackage[most]{tcolorbox}

\usepackage[preprint]{neurips_2025}

\usepackage{amsmath}
\usepackage{amssymb}
\usepackage{mathtools}
\usepackage{amsthm}
\usepackage{interval}
\usepackage{dsfont}
\usepackage{wrapfig}
\usepackage[capitalize,noabbrev]{cleveref}

\theoremstyle{plain}
\newtheorem{theorem}{Theorem}[section]
\newtheorem{proposition}[theorem]{Proposition}
\newtheorem{lemma}[theorem]{Lemma}

\theoremstyle{definition}
\newtheorem{definition}[theorem]{Definition}
\newtheorem{assumption}[theorem]{Assumption}
\theoremstyle{remark}

\usepackage{mdframed}
\usepackage{xpatch}
\makeatletter
\xpatchcmd{\endmdframed}
  {\aftergroup\endmdf@trivlist\color@endgroup}
  {\endmdf@trivlist\color@endgroup\@doendpe}
  {}{}
\makeatother

\mdfdefinestyle{theoremstyle}{%
linecolor=black,linewidth=0.8pt,%
topline=false,bottomline=false,rightline=false,%
leftmargin =-.0cm, rightmargin=+.0cm,
usetwoside=false,
skipabove=5pt,
skipbelow=0pt,
innertopmargin=5pt,
innerbottommargin=2pt,
innerrightmargin=0pt,
innerleftmargin=5pt,
}
\theoremstyle{plain}

\newcommand{\TextProp}[1]{{\color{red}P??}}

\newcommand{\Sec}[1]{Sec.~\ref{sec:#1}}

\newcommand{\Appendix}[1]{Appendix~\ref{appendix:#1}}
\newcommand{\Figure}[1]{Figure~\ref{fig:#1}}
\newcommand{\Fig}[1]{Fig.~\ref{fig:#1}}
\newcommand{\Proposition}[1]{Proposition~\ref{prop:#1}}

\newcommand{\Lemma}[1]{Lemma~\ref{lem:#1}}
\newcommand{\Assumption}[1]{Assumption~\ref{assumption:#1}}

\usepackage{svg}
\usepackage{pgfplots}
\pgfplotsset{compat=1.18}
\usepgfplotslibrary{statistics}
\usetikzlibrary{pgfplots.statistics}
\usetikzlibrary{positioning, arrows.meta, shapes.geometric, calc}
\usepgfplotslibrary{fillbetween}
\usetikzlibrary{fit}

\pgfplotsset{
    discard if not/.style 2 args={
        x filter/.append code={
            \edef\tempa{\thisrow{#1}}
            \edef\tempb{#2}
            \ifx\tempa\tempb
            \else
                
            \fi
        }
    }
}

\pgfdeclareplotmark{mystar}{
    \node[star,star point ratio=2.25,minimum size=6pt,
          inner sep=0pt, solid,
          draw=\markerdraw, fill=\markerfill
          ] {};
}

\DeclareMathOperator*{\argmin}{arg\,min}

\newcommand{\pnorm}[2]{\left\lVert #2 \right\rVert_{#1}}
\newcommand{\pprod}[3]{\left\langle #2, #3 \right\rangle_{#1}}

\usepackage{xr}
\usepackage{xcolor}
\usepackage{ifthen}

\newboolean{showcomments}
\setboolean{showcomments}{true}
\ifthenelse{\boolean{showcomments}}
{ \newcommand{\mynote}[3]{
		\fbox{\bfseries\sffamily\scriptsize#1}
		{\small$\blacktriangleright$\textsf{\emph{\color{#3}{#2}}}$\blacktriangleleft$}}
	\newcommand{\zzz}[1]{{\setlength{\fboxsep}{2pt}\fcolorbox{black}{yellow}{\textsf{\emph{#1}}}}\xspace}}
{ \newcommand{\mynote}[3]{}
	\newcommand{\zzz}[1]{}}

\definecolor{tab1}{RGB}{31, 119, 180}
\definecolor{tab2}{RGB}{255, 127, 14}
\definecolor{tab3}{RGB}{44, 160, 44}
\definecolor{tab4}{RGB}{214, 39, 40}
\definecolor{tab5}{RGB}{148, 103, 189}

\usepackage[textsize=tiny]{todonotes}
\usepackage{thmtools, thm-restate}

\author{%
  David~A.~R.~Robin
  \\
  Université Paris Dauphine\\
  PSL Research University \\
  \And
  Rafaël Pinot \\
  Sorbonne Université \\
  LPSM, UMR 8001 \\
  \And
  Yann Chevaleyre \\
  Université Paris Dauphine \\
  PSL Research University \\
}

\begin{document}

\title{ Robustness Cannot be Reduced to Regularization: \\
    Studying Adversarial Training Beyond the Linear Case }

\maketitle{}

\begin{abstract}
    The vulnerability of ML models to adversarial examples has recently emerged as a major concern. While adversarial training is one of the most effective countermeasures to this issue, its high computational cost remains an obstacle to practical deployment. Recent progress in reducing this cost has relied, in the case of linear models, on a formal equivalence between the adversarial risk and a simpler form of regularized risk.
    This enabled significantly more efficient training procedures, which naturally raises the question of whether such an equivalence can be extended beyond linear models. In this work, we formally show that \emph{no such equivalence is possible} for two-layer networks. Our proofs proceed via a reduction to key properties that fundamentally separate the adversarial risk from any simple regularized risk which would only exhibit a weak form of data dependence. Beyond this setting, we provide empirical evidence on Wide-ResNets indicating that the same type of impossibility persists in deeper and more expressive architectures.
\end{abstract}

\section{Introduction}
\label{sec:intro}

Adversarial example attacks recently became a major concern in the machine learning community. An adversarial attack refers to a small, imperceptible change of an input that is maliciously designed to fool a machine learning algorithm~\citep{biggio2013evasion,Szegedy2013IntriguingPO,goodfellow2015explaining}. The vulnerability of state-of-the-art classifiers to such attacks raises significant security concerns, particularly for deep neural networks deployed in safety-critical applications~\citep{kumar2019failure,paleyes2022challenges,carlini2024aligned}. This limitation of existing models can largely be attributed to the standard training paradigm, which focuses on minimizing a point-wise loss
${\mathcal{L} : \Theta \times \mathcal{X} \times \mathcal{Y} \to \mathbb{R}}$
averaged along a ground-truth distribution $\mathcal{D}$ over $\mathcal{X} \times \mathcal{Y}$, i.e., \vspace{3pt}
\[
      \mathcal{R}(\mathcal{D}; \theta) \> := \> \mathbb{E}_{(x,y) \sim \mathcal{D}} \left[ \mathcal{L}(\theta, x, y) \right] \quad  \forall \theta \in \Theta.
\]

However, as noted early on by \citet{goodfellow2015explaining}, this objective does not explicitly encourage the model to exhibit regularity properties beyond minimizing this expected loss (a.k.a. risk). Consequently, even when the point-wise loss of a model $\theta$ is small on a sample $(x,y)\sim\mathcal{D}$, an adversary can construct a perturbed input $x+\tau$ (an adversarial example) such that $\lVert \tau \rVert$ is small under a reference norm\footnote{%
Much theoretical progress has been achieved for attacks with bounded $\ell_2$
norm \citep{li2019inductive}, but the main target in image classification
remains $\ell_\infty$-norm attacks, which are not perceived by humans.
} $\lVert \cdot \rVert$, but for which the loss increases substantially.
To account for adversarial manipulations of the inputs, the standard objective must be revised to explicitly incorporate the adversary. The goal becomes minimizing the worst-case expected loss (a.k.a. adversarial risk) under $\delta$-bounded perturbations.
This adversarial risk  $\mathcal{R}_\delta$ can be written as
\[
  \mathcal{R}_\delta(\mathcal{D}; \theta) \> := \>
  \mathbb{E}_{(x,y) \sim \mathcal{D}}
  \left[
    \sup_{\lVert \tau \rVert \leq \delta}
    \mathcal{L}(\theta, x + \tau, y)
    \right] \quad \forall \theta \in \Theta.
\]

The main challenge in minimizing adversarial risk lies in the intractability of the associated min-max objective. Nevertheless, a substantial body of work has focused on solving this problem approximately through surrogate optimization schemes. These approaches, commonly referred to as adversarial training, currently constitute the state-of-the-art defense against adversarial attacks~\citep{croce2020reliable,bartoldson2024adversarial}. In essence, adversarial training involves generating, at each optimization step, an adversarial example for every data point, typically via gradient ascent~\citep{madry2018towards,zhang2019theoretically}. Despite its strong empirical performance, this procedure is computationally demanding, as perturbations must be recomputed at every iteration. Moreover, it remains unclear how closely the resulting solutions approximate the true adversarial optimum.
Consequently, understanding the impact of adversarial training on the learning dynamics of a complex model remains an open problem.

\textbf{Actionable rewriting in the linear case.} To address these limitations, recent works focused on reformulating the adversarial risk in more interpretable/tractable manners, yielding identical robustness at lower computational cost,
by expressing the adversarial risk $\mathcal{R}_\delta$ as a regularized objective of the form \begin{align}
\mathcal{R}_\delta(\mathcal{D};\theta) = \mathcal{R}(\mathcal{D};\theta) + \lambda \cdot \Omega_\delta(\theta)  \quad \forall \theta \in \Theta,
\label{eq:regularized_adv_problem}
\end{align}

where $\Omega_\delta(\theta)$ is an easily computable and interpretable data-independent regularizer, and $\lambda > 0$ is a positive hyperparameter. Such reformulations can both clarify the mechanisms behind adversarial training and suggest more efficient surrogate methods for enforcing robustness. In this direction, \citet{roth2020adversarial} showed that adversarial training is equivalent to a form of data-dependent operator norm regularization. While this perspective is valuable for interpretation, the data-dependency of the result does not directly yield a simpler or more efficient surrogate in terms of computational cost. More recently~\citet{ribeiro2023regularization} showed that adversarial training for linear models is equivalent to a parameter-shrinking regularization that is inexpensive to compute. In particular, when $\mathcal{L}$ is a margin loss of the form $\mathcal{L}(x,y,\theta) = L(y\langle x,\theta\rangle)$ where $L : \mathbb{R} \to \mathbb{R}$ is a lower-semicontinuous convex non-increasing function, the adversarial risk $\mathcal{R}_\delta$ can be written as\vspace{3pt}
\[
\mathcal{R}_\delta(\mathcal{D};\theta) = \mathbb{E}_{(x,y)\sim\mathcal{D}}\bigl[L\!\left(y\langle x,\theta\rangle - \delta\|\theta\|_*\right)\bigr],
\quad \forall \theta\in\Theta,
\]
where $\|\theta\|_*$ denotes the dual norm of $\theta$.
Although not of the form~\eqref{eq:regularized_adv_problem},
this rewriting remains a practically actionable simplification of the adversarial risk and can be leveraged to design accelerated algorithms~\citep{ribeiro2025efficient}. This result is particularly promising, as its extension to more complex architectures could enable new interpretations and implementations of adversarial training, potentially unlocking robust learning at scale. It is therefore crucial to investigate whether similar reformulations can be derived for more expressive models.

\textbf{What happens beyond the linear case?} In this paper, we take a step towards addressing this open problem by showing that \emph{no such reformulation a priori exists beyond the linear case}. We focus on the simplest class of functions that is strictly more expressive than linear models, namely two-layer networks. Indeed, if no tractable reformulation of the adversarial risk exists even for this class, it is unlikely, a fortiori, that such a rewriting would be possible for more complex models. Formally, we consider the mappings ${f : \Theta \times \mathbb{R}^d \to \mathbb{R}^K}$ parameterized over ${\Theta = \mathbb{R}^{m \times d} \times \mathbb{R}^{m \times K}}$ and defined as \vspace{3pt}
\[  f : ( (w,a), x) \mapsto \sum_{i \in [m]} a_i \, \sigma(w_i \cdot x), \]%, \, \, \forall \theta = (w,a) \in \Theta, \]
where $\sigma$ is a differentiable approximation\footnote{
Formally, we assume $\sigma$ differentiable %on $\mathbb{R}$
and that there exists $\varepsilon > 0$ such that
$\lvert u \rvert > \varepsilon \Rightarrow \sigma(u) = \max(0, u)$.
Taking $\varepsilon$ smaller than the machine epsilon,
this is indistinguishable from ReLU at floating-point precision.\linebreak[3]
This is labeled \Assumption{smooth-relu}, and ensures that $f$ is differentiable at all points, see \Lemma{network-differentiable}.
} of ReLU, which ensures adversarial training is well-defined.
We also consider $\mathcal{L}$ to be the cross-entropy loss $\mathcal{L}(\theta, x,y) = \ell(y, f(\theta, x))$
where
${\ell :
% (y \in [k], u \in \mathbb{R}^k)
(y,z)
\mapsto - z_y + \log( \sum_{ i \in  [K] } \exp(z_i) )}$.
In this context, our contributions can be put as follows.

\textbf{Our Contributions.} We analyze the \emph{adversarial gap}
induced by an arbitrary norm $\lVert \cdot \rVert$, defined as
\[
\mathcal{A}_\delta(\mathcal{D}; \theta)
:= \mathcal{R}_\delta(\mathcal{D}; \theta) - \mathcal{R}(\mathcal{D}; \theta),
\]
and systematically investigate whether it can be expressed as either as a data-independent regularization or a weakly data-dependent regularization  term. Specifically, we consider three families of functions whose structure would lead to curbed computational cost, and show that the adversarial gap belongs to neither of the three.
Firstly, we consider separable functions, such as $\theta \mapsto \sum_i \lvert \theta_i \rvert^p$, which can be rejected by simple arguments derived from studying the shape of
the set of minimizers (\Sec{minima-displacement}).
Secondly, by a tighter analysis, we show that no data-independent function
can be a generic rewriting of the adversarial gap, by studying the rotation
of adversarial gap gradients when data distributions change, which additive regularizations
cannot match (\Sec{rejecting-data-independence}).
Thirdly, we consider a class of loosely data-dependent regularizers studied in
\citet{blanchet2019robust} and \citet{ribeiro2023regularization},
and show again that even for two-layer networks, no such function can be a rewriting of
the adversarial gap, by studying the structural dependence of
adversarial gap gradients on the data distribution
(\Sec{rejecting-loss-modulation}).
The main tools for these analyses are the study of minima on specific slices of
parameter space, and the computation of cosine similarities between gradients of
various regularizers.

\textbf{Notations.}
For $n \in  \mathbb{N}$, we write $[n] := \{ 0, \ldots, n-1 \}$ the set of integers strictly smaller than $n$. For $v \in \mathbb{R}^n$, we write $v = (v_i)_{i \in [n]} = (v_0, \ldots, v_{n-1})$,
and for any $u \in \mathbb{R}^q$, we write
${(u \otimes v) \in \mathbb{R}^{q \times n}}$ their tensor product.
We indicate partial derivatives indifferently as index or with a fraction,
so the derivative of the cross-entropy w.r.t. its second argument
is written ${\partial_z \ell(y,z) = (\partial \ell / \partial z)(y,z)}$.
\linebreak[1]
We abuse the notation $\mathcal{D} = \{ (x,y) \}$ to define a
Dirac distribution supported on a single point $(x,y)$.
For any symmetric positive semi definite (PSD) matrix $Q \in \mathbb{R}^{n \times n}$,
and $(u,v) \in \mathbb{R}^n \times \mathbb{R}^n$, we write $\langle u, v \rangle_Q = u^T \cdot Q \cdot v$ and the associated semi-norm $\lVert u \rVert_Q^2 = \langle u, u \rangle_Q$.

\section{Preliminaries}

To define the set of possible regularizations we consider the set $\mathcal{F} \subsetneq (\Theta \to \mathbb{R})$ of
continuous functions almost-everywhere differentiable
(i.e.\ the set of points at which it is not differentiable
has Lebesgue measure zero). To determine whether the adversarial gap admits a reformulation as a simplified regularizer, we introduce the equivalence set $P \subseteq \mathcal{F}^2 \times \mathbb{R}_+^*$ defined as
\begin{align}
\label{eq:propdefinition}
P = \{\, (\Omega_1, \Omega_2, \lambda) \in \mathcal{F}^2 \times \mathbb{R}_+^* \mid
  \exists c \in \mathbb{R}, \> \Omega_2 = \lambda \cdot \Omega_1 + c \,\}
\end{align}

The set $P$ induces an equivalence relation ${(\Omega_1 \sim_P \Omega_2) \Leftrightarrow (\exists \lambda \in \mathbb{R}_+^*, \, (\Omega_1, \Omega_2, \lambda) \in P)}$.
Accordingly, if we find a regularizer $\Omega_\delta$ such that $\Omega_\delta \sim_P \mathcal{A}_\delta(\mathcal{D}; \cdot)$, then up to a tunable parameter $\lambda \in \mathbb{R}_+^*$, the losses
$\mathcal{R}(\theta) + \mathcal{A}_\delta(\mathcal{D}; \theta) $
and $ {\mathcal{R}(\theta) + \lambda \cdot \Omega_\delta(\theta) }$
are interchangeable during training.
The objective of our study is to determine whether there exists a mapping that is equivalent to $\mathcal{A}_\delta$ while being data-independent, or weakly data-dependent as presented in~\citep{ribeiro2023regularization}.

\textbf{Reduction to limits.}
\footnote{
All limits considered use the pointwise topology, continuity of gradients with respect to the input is sufficient to ensure that $A(\mathcal{D}; \theta)$ is in $\mathcal{F}$ for compactly-supported distributions, which is enforced by differentiability of $\sigma$.
}
A key property of the equivalence relation we just discussed is that it is stable by passage to the limit. This means that sequences of functions which are equivalent must have equivalent limits (up to non-degeneracy of the $\lambda$ sequence). This is particularly relevant for the study of adversarial training,
because it allows to alleviate the difficulty of the analysis,
through a reduction to the study of the limits
near $\delta \to 0$.
In particular, the adversarial gap admits as $\delta \to 0$ a limit
\begin{align}
\label{eq:advgap}
\mathcal{A}_\delta(\mathcal{D}; \theta) / \delta \, \underset{\delta \to 0}{\longrightarrow} \,
A(\mathcal{D}; \theta) := \mathbb{E}_{(x,y) \sim \mathcal{D}} \left[ \lVert \partial_x \mathcal{L}(y, x, \theta) \rVert_* \right] \quad \forall \theta \in \Theta.
\end{align}
Understanding the structure of the adversarial gap $\mathcal{A}_\delta(\mathcal{D}; \cdot)$ is challenging due to its dependency on both the attacker strength $\delta > 0$ and the data distribution $\mathcal{D}$. By considering the equivalence relation in the limit, we can alleviate one of these two sources of difficulty. Furthermore, the stability of the equivalence relation at the limits allows us to do so without any loss in generality.
Thus we can reject all families of candidates $(\Omega_\delta)_{\delta > 0}$  as soon as they admit as $\delta \to 0$ a limit $\Omega_\delta / \delta \to \Omega_+$
that is not equivalent to $A(\mathcal{D}; \cdot)$. This is formalized as Proposition~\ref{prop:convergencepropB}, with proof deferred to Appendix~\ref{appendix:definition}.

\begin{restatable}{proposition}{convergencepropB}
\label{prop:convergencepropB}
    Let $(\Omega_\delta \in \mathcal{F})_{\delta >0}$ be a family of regularizers
    such that $\Omega_\delta / \delta$ admits a limit $\Omega_+ \in \mathcal{F}$ as $\delta$ goes to $0$. If for every $\delta  >0$ there exists $c_\delta \in \mathbb{R}$ and $\lambda_\delta \in \mathbb{R}_+^*$ such that ${\Omega_\delta = \lambda_\delta \cdot \mathcal{A}_\delta(\mathcal{D}; \cdot) + c_\delta}$ with $\lambda_\delta = \Theta(1)$ near $\delta \to 0$, then  $\>\Omega_+$ is equivalent to $A(\mathcal{D}; \cdot)$, i.e. $\Omega_+ \sim_P A(\mathcal{D}; \cdot)$.
\end{restatable}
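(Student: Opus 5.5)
Divide the hypothesis by $\delta$ and pass to the limit. For every $\delta>0$ we have, pointwise in $\theta$,
\[
\frac{\Omega_\delta(\theta)}{\delta} \;=\; \lambda_\delta \cdot \frac{\mathcal{A}_\delta(\mathcal{D};\theta)}{\delta} + \frac{c_\delta}{\delta}.
\]
The left-hand side converges pointwise to $\Omega_+(\theta)$ by assumption. The ratio $\mathcal{A}_\delta(\mathcal{D};\theta)/\delta$ converges pointwise to $A(\mathcal{D};\theta)$ by \eqref{eq:advgap}, which we take as given. The only unknowns are therefore the scalar sequences $\lambda_\delta$ and $c_\delta/\delta$.

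\textbf{Step 1: extract a convergent subsequence of $\lambda_\delta$.} Since $\lambda_\delta=\Theta(1)$, there are constants $0<m\le M$ with $\lambda_\delta\in[m,M]$ for all small $\delta$. By compactness, pick $\delta_n\to0$ with $\lambda_{\delta_n}\to\lambda_*\in[m,M]$. In particular $\lambda_*>0$, so it lies in $\mathbb{R}_+^*$. This non-degeneracy is exactly what the $\Theta(1)$ hypothesis is for: the lower bound rules out $\lambda_*=0$, and the upper bound rules out blow-up.

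\textbf{Step 2: show $c_{\delta_n}/\delta_n$ converges.} Fix an arbitrary point $\theta_0\in\Theta$ and write
\[
\frac{c_{\delta_n}}{\delta_n} \;=\; \frac{\Omega_{\delta_n}(\theta_0)}{\delta_n} - \lambda_{\delta_n}\,\frac{\mathcal{A}_{\delta_n}(\mathcal{D};\theta_0)}{\delta_n}.
\]
Each term on the right converges, so the constants converge to
\[
c_* := \Omega_+(\theta_0)-\lambda_* A(\mathcal{D};\theta_0).
\]
The value of $c_*$ does not depend on $\theta_0$, since the left-hand side does not.

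\textbf{Step 3: pass to the limit at every $\theta$.} Taking limits along $\delta_n$ at each $\theta$ gives
\[
\Omega_+(\theta)=\lambda_* A(\mathcal{D};\theta)+c_* \quad \text{for all } \theta.
\]
Equivalently, $A(\mathcal{D};\cdot)=\lambda_*^{-1}\Omega_+-c_*/\lambda_*$. So $(\Omega_+, A(\mathcal{D};\cdot),\lambda_*^{-1})\in P$, and likewise in the other order. Membership in $P$ also requires $A(\mathcal{D};\cdot)\in\mathcal{F}$. This holds because $A(\mathcal{D};\cdot)$ is an affine image of $\Omega_+\in\mathcal{F}$ (and it also follows from the footnote's regularity remark).

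\textbf{Main obstacle.} The only delicate point is controlling the constants $c_\delta$, which a priori could diverge. The trick of anchoring at a fixed $\theta_0$ resolves this, because it expresses $c_\delta/\delta$ through quantities that are already known to converge. The subsequence extraction is needed because $\lambda_\delta$ itself need not converge. Once any limit $\lambda_*$ is identified, it is in fact unique whenever $A(\mathcal{D};\cdot)$ is non-constant, but uniqueness is not needed for the equivalence.
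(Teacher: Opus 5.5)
Your proof is correct, but it takes a different route from the paper's.

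\textbf{The paper's route.} The paper first proves an auxiliary sequential lemma: if $\Phi_n\to\Phi_\infty$, $\Psi_n\to\Psi_\infty$, $(\Phi_n,\Psi_n,\lambda_n)\in P$ and $\lambda_n=\Theta(1)$, then $\Phi_\infty\sim_P\Psi_\infty$. It proves this by a case split on whether $\Phi_\infty$ is constant.
\begin{itemize}
\item If $\Phi_\infty$ is not constant, two points $u,v$ with $\Phi_\infty(u)\neq\Phi_\infty(v)$ force $\lambda_n=(\Psi_n(u)-\Psi_n(v))/(\Phi_n(u)-\Phi_n(v))$ to converge. The lower bound in $\Theta(1)$ keeps this limit positive, and the constants $c_n$ then converge too.
\item If $\Phi_\infty$ is constant, the upper bound on $\lambda_n$ alone shows that $\Psi_\infty$ is also constant.
\end{itemize}

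\textbf{Your route.} You use boundedness of $\lambda_\delta$ to extract a convergent subsequence by compactness. You then anchor $c_\delta/\delta$ at a single point $\theta_0$. This removes the case distinction entirely and gives a shorter, uniform argument.

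You also work directly with the hypothesis in the stated direction, $\Omega_\delta=\lambda_\delta\,\mathcal{A}_\delta+c_\delta$. The paper instead applies its lemma to the triple $(\Omega_\delta/\delta,\mathcal{A}_\delta/\delta,\lambda_\delta)$, which corresponds to the reverse relation. That swap is harmless, since $1/\lambda_\delta=\Theta(1)$ as well.

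\textbf{What each buys.} The paper's argument gives slightly more information. In the non-constant case the whole family $\lambda_\delta$ converges, with an explicit limit. In the constant case only the upper bound on $\lambda_\delta$ is needed. Your argument identifies only a subsequential limit $\lambda_*$, which, as you note, is all that equivalence requires.
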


This proposition will be used through its contraposition, to disprove equivalence
by showing that limits are not equivalent.
The condition $\lambda_\delta = \Theta(1)$ prevents $\lambda_\delta$ from vanishing or diverging.
For instance, regularizers with $\Omega_\delta / \delta \to \Omega_+ = 0$ could satisfy the affine relationship for all $\delta$ by using a sequence $\lambda_\delta \to 0$, yielding inequivalent limits ${\Omega_+ = 0 \not\sim_P {A}(\mathcal{D}; \cdot)}$,
whereas controlled $(\lambda_\delta)_\delta$ ensure that equivalence transfers to the limits.
Note that ${\mathcal{A}_\delta(\mathcal{D}; \cdot) \to 0}$ as $\delta \to 0$,
since there is no adversarial gap when the attacker has vanishing capacity.
It is natural to require candidates $\Omega_\delta$ to satisfy $\Omega_\delta \to 0$ and admit a derivative with respect to~$\delta$, the simplest case being linear dependence $\Omega_\delta := \delta \cdot \Omega_+$

\textbf{Using the equivalence to reject candidate rewriting.}
In the remainder, we leverage this equivalence and its stability at the limit
to show that the adversarial gap cannot be rewritten as a
data-independent or weakly data-dependent regularization.
Specifically, we observe that the equivalence relation in~\eqref{eq:propdefinition} implies that equivalent functions satisfy two easily verifiable properties:
(i) \emph{matching minima} and (ii) \emph{aligned gradients}.
Hence, any candidate regularization $\Omega_+$ intended as a valid reformulation of $\mathcal{A}(\mathcal{D}; \cdot)$ must also satisfy both properties.
To show that a given candidate is not a suitable rewriting for the adversarial gap, it suffices to demonstrate that it violates at least one of these two properties.

\section{Rejecting regularizers by mismatch of minima}\label{sec:minima-displacement}

The first property we introduce for ruling out several candidate reformulations is the requirement that equivalent functions have \emph{matching minima}. Informally, this property states that if two functions are equivalent, then, although their values may differ, their level sets (and thus their minima) must coincide. This property is formalized in Proposition~\ref{prop:aligned-optima} and proved in \Appendix{aligned-optima-proof}.

\begin{restatable}[Equivalence implies matching restricted minima]{proposition}{alignedoptimaprop}\label{prop:aligned-optima}
  If $\Omega_1 \sim_P \Omega_2$
  then
  \[
    \forall \mathcal{U} \subseteq \Theta,
    \textstyle
    \quad \argmin_{\,\mathcal{U}} \Omega_1 = \argmin_{\,\mathcal{U}} \Omega_2
  \]
\end{restatable}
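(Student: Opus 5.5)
The plan is to unfold the definition of $\sim_P$ and use the fact that a strictly increasing affine map preserves the order between values. If $\Omega_1 \sim_P \Omega_2$, the definition of $P$ in \eqref{eq:propdefinition} gives $\lambda \in \mathbb{R}_+^*$ and $c \in \mathbb{R}$ with $\Omega_2 = \lambda \cdot \Omega_1 + c$ pointwise on $\Theta$. Let $g : t \mapsto \lambda t + c$. Since $\lambda > 0$, $g$ is a strictly increasing bijection of $\mathbb{R}$ and $\Omega_2 = g \circ \Omega_1$. This is the only property of equivalence the argument uses.

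Next, fix an arbitrary $\mathcal{U} \subseteq \Theta$ and prove both inclusions between the restricted argmin sets. Take $\theta^\star \in \argmin_{\mathcal{U}} \Omega_1$, so that $\theta^\star \in \mathcal{U}$ and $\Omega_1(\theta^\star) \le \Omega_1(\theta)$ for every $\theta \in \mathcal{U}$. Applying the monotone map $g$ gives $\Omega_2(\theta^\star) \le \Omega_2(\theta)$ for all $\theta \in \mathcal{U}$, hence $\theta^\star \in \argmin_{\mathcal{U}} \Omega_2$.

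For the reverse inclusion, write $\Omega_1 = \lambda^{-1}\Omega_2 - \lambda^{-1}c$. The inverse map $g^{-1}$ is also strictly increasing, since $\lambda^{-1} > 0$, so the same argument applies. Equivalently, note that $\sim_P$ is symmetric and rerun the first inclusion with the roles of $\Omega_1$ and $\Omega_2$ swapped.

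There is essentially no obstacle. The only points to check are that the argmin is taken as a set, possibly empty, and that emptiness is handled. Because $g$ is a bijection preserving strict and non-strict order, one set is empty exactly when the other is, so no existence or continuity assumption from $\mathcal{F}$ is needed. The positivity $\lambda > 0$ is the crucial hypothesis: a negative $\lambda$ would swap minimizers and maximizers, and $\lambda = 0$ would make every point of $\mathcal{U}$ a minimizer of $\Omega_2$. The definition of $P$ excludes both by requiring $\lambda \in \mathbb{R}_+^*$.
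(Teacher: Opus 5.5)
Your proof is correct and matches the paper's argument: the paper also notes that $\Omega_2 = \lambda \Omega_1 + c$ with $\lambda > 0$ gives $\Omega_2(u) \leq \Omega_2(v) \Leftrightarrow \Omega_1(u) \leq \Omega_1(v)$ for all $u, v \in \Theta$, from which the equality of restricted minimizer sets follows immediately. Your two-inclusion write-up and your remarks on empty argmin sets and on $\lambda > 0$ simply spell out that one-line observation in more detail.
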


This simple observation is of both theoretical and practical relevance.
From a theoretical perspective, it means that it can be sufficient to analyze the ``shape'' of the set of minimizers
inside an arbitrary region $\mathcal{U}$ to disprove equivalence, instead of having to deal with function values
or global minima. We develop this idea in the case of two-layer neural networks in Section~\ref{sec:formalrejection1}. On the other hand, from a practical standpoint, it provides a tool for probing such properties in large neural networks,
via what we call \textit{slicing}. We further devlop the concept of slicing and its use in Section~\ref{sec:emprejection1}.

\subsection{Formal rejection of separable candidates for two-layer networks}
\label{sec:formalrejection1}

From a theoretical standpoint, we can use Proposition~\ref{prop:aligned-optima} to reject any data-independent neuron-wise separable candidate reformulation.
Specifically, for functions $\psi : \mathbb{R}^{m \times d} \to \mathbb{R}$ and
$\varphi_i : \mathbb{R}^K \to \mathbb{R}\,$ for all $i \in [m]$,
we consider the following set regularizers $\Omega_+ : \Theta \to \mathbb{R}$
over $\Theta = \mathbb{R}^{m \times d} \times \mathbb{R}^{m \times K}$.
\begin{equation}\label{eq:def-candidate-1}\tag{C1}
  \Omega_+ : (w,a) \mapsto \psi(w) + \sum_{i \in [m]} \varphi_i(a_i)
\end{equation}
This set of functions naturally includes simple sums of functions depending on only one parameter such as the the $L_1$ norm $\Omega_+(w,a) = \Vert w \Vert_1 + \Vert a \Vert_1$, the squared $L_2$ norm $\Omega_+(w,a) = \Vert w \Vert_2^2 + \Vert a \Vert_2^2 $, or more generally $\Omega_+(w,a) = \Vert w \Vert_p^p + \Vert a \Vert_p^p$. This class can also leverage the structure of weights more finely, for example using a $L_{2,1}$-norm on the weights of $a$, i.e. $\lVert a  \rVert_{2,1} := \sum_{i \in [m]} \lVert a_i \rVert_2$. This choice is not parameter-separable, but remains separable ``by neuron''. We formalize, in Proposition~\ref{prop:no-separability} below, the statement saying that candidates regularizer of this general form cannot be equivalent to the adversarial gap as introduced in~\eqref{eq:advgap}.

\begin{restatable}{proposition}{propnosep}\label{prop:no-separability}
  Let
  $\psi : \mathbb{R}^{m \times d} \to \mathbb{R}$
  and
   $\varphi_i : \mathbb{R}^{K} \to \mathbb{R}$
  for $i \in [m]$
  be continuous functions almost-everywhere differentiable,
  and define $\Omega_{+} : (w,a) \mapsto \psi(w) + \sum_{i \in [m]} \varphi_i(a_i)$
  as per \eqref{eq:def-candidate-1}.
  \linebreak[2]
  If $m \geq 3$,
  then there exists a distribution $\mathcal{D}$ such that
  ${\Omega_{+} \not\sim_P A(\mathcal{D}; \cdot)}$.
\end{restatable}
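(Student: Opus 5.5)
The plan is to argue by contradiction via Proposition~\ref{prop:aligned-optima}, using a slice $\mathcal{U}$ of parameter space on which $\Omega_+$ necessarily has a \emph{product-shaped} set of minimizers while $A(\mathcal{D};\cdot)$ does not. Fix the first-layer weights $w$ once and for all, and restrict the second-layer weights to a product of finite sets, $\mathcal{U} = \{w\} \times \prod_{i \in [m]} S_i$ with $S_i \subseteq \mathbb{R}^K$. On $\mathcal{U}$ the candidate reads $\Omega_+(w,a) = \psi(w) + \sum_i \varphi_i(a_i)$, and $\psi(w)$ is a constant there. Hence
\[
\textstyle \argmin_{\mathcal{U}} \Omega_+ = \{w\} \times \prod_i \argmin_{S_i} \varphi_i ,
\]
which is a Cartesian product, whatever $\psi$ and the $\varphi_i$ are. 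It therefore suffices to exhibit $\mathcal{D}$, $w$ and the sets $S_i$ such that $\argmin_{\mathcal{U}} A(\mathcal{D};\cdot)$ is not a product set.

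For the construction I take a Dirac distribution $\mathcal{D} = \{(x,y)\}$ with $K=2$ and $y=1$. I choose $w_0 = w_1 = w_2 = u$ with $u \cdot x > \varepsilon$, so that by \Assumption{smooth-relu} these neurons are in the linear regime: $\sigma(u\cdot x) = u \cdot x =: r > 0$ and $\sigma'(u\cdot x) = 1$. Any further neurons ($i \geq 3$) get $S_i = \{0\}$, so they contribute nothing. I take $a_i = t_i e_0$ with scalar $t_i$, and set $s = t_0 + t_1 + t_2$. The output is then $f = s\, r\, e_0$. The chain rule gives
\[
\partial_x \mathcal{L} = \big(\partial_z \ell(y,f) \cdot \textstyle\sum_i a_i\big)\, u = p_0(sr)\, s\, u ,
\]
where $p_0$ is the logistic sigmoid (the softmax probability of class $0$). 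Consequently
\[
A(\mathcal{D};\theta) = \lVert u \rVert_*\, |s|\, p_0(sr) ,
\]
which is nonnegative and vanishes exactly when $s = 0$, because $p_0 > 0$.

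Now choose $S_0 = S_1 = \{0, 1\}$ and $S_2 = \{-1, 0\}$. The minimizers of $A$ on $\mathcal{U}$ are the points with $s = 0$, namely
\[
(t_0,t_1,t_2) \in \{(0,0,0),\ (1,0,-1),\ (0,1,-1)\} .
\]
This set is not a product: its coordinate projections would force $(1,1,-1)$ to belong to it, but that point has $s = 1$ and $A > 0$. This contradicts Proposition~\ref{prop:aligned-optima}, so $\Omega_+ \not\sim_P A(\mathcal{D};\cdot)$. Three neurons give a clean example, and the hypothesis $m \geq 3$ accommodates it; a two-neuron variant may also work, but it is not needed.

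The main point to check carefully is the computation of $A$ on the slice: the smooth-ReLU regime must make $\sigma' = 1$ exactly, the cross-entropy gradient in $z$ must be exactly $(p_0, -p_0)$ for $y=1$, and $A$ must vanish only at $s=0$. A second thing to confirm is that Proposition~\ref{prop:aligned-optima} applies to an arbitrary (finite) subset $\mathcal{U}$, as stated. If one prefers a connected $\mathcal{U}$, I would replace the $S_i$ by intervals, e.g.\ $t_0, t_1 \in [0,1]$ and $t_2 \in [-1,0]$. The argmin is then the plane section $\{s=0\}$ of that box, which again is not a product of intervals.
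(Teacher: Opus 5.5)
Your argument is correct and rests on the same structural fact as the paper's proof. With the first layer frozen and the active neurons sharing one weight vector, a separable candidate has a product-shaped set of minimizers on the slice. By contrast, $A(\mathcal{D};\cdot)$ sees $a$ only through $\sum_i a_i$, so it vanishes on a ``diagonal'' that is not a product.

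The execution differs. The paper works on the full slice $\{u\}\times\mathbb{R}^K\times\mathbb{R}^K\times\{0\}$. It uses the anti-diagonal $a_1=-a_0$ to force $\argmin\varphi_0=\argmin\varphi_1=\mathbb{R}^K$, and then needs Lemma~\ref{lem:nontriviality-ce-grad} to find a point $(u,(a_0,0,\dots,0))$ of that product with $A\neq 0$. Your finite slice along $e_0$ gives the closed form $A=\lVert u\rVert_*\,|s|\,p_0(sr)$ and exhibits a non-product argmin directly, so no separate non-triviality step is needed. You also state explicitly the condition $u\cdot x>\varepsilon_0$ that puts $\sigma$ in its linear regime, which the paper's choice $x=w_0$ leaves implicit.

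Your two-neuron guess is right: $S_0=\{0,1\}$, $S_1=\{-1,0\}$ gives the argmin $\{(0,0),(1,-1)\}$, which is not a product. This matches the paper, which also only activates $a_0,a_1$.

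One small repair: $K$ is fixed by the setting, not yours to choose. Nothing in your argument depends on $K=2$, though. For any $K\geq 2$ and any label $y$, the same computation gives $A=\lVert u\rVert_*\,|s|\,\lvert p_0-\mathds{1}_{y=0}\rvert$ with $p_0=e^{sr}/(e^{sr}+K-1)\in(0,1)$. This again vanishes exactly when $s=0$.
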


To establish this result, our main idea is to leverage Proposition~\ref{prop:aligned-optima}, that is, to construct a dataset $\mathcal{D}$ and a subset $\mathcal{U}$ for which $\mathcal{A}(\mathcal{D}; \cdot)$ does not share the same set of minimizers as any regularization that fits the class we just described. To this end, we rely on the observation that the set of minimizers of any separable function decomposes as the Cartesian product of the minimizers of its components,
i.e. for mappings $F : \mathcal{W} \times \mathcal{V} \to \mathbb{R}$ such that $F(\alpha, \beta) :=  g(\alpha) + h(\beta)$ for all for $(\alpha, \beta)  \in \mathcal{W} \times \mathcal{V}$, then
\begin{align}
\label{eq:separablefunction}
  \argmin_{(\alpha,\beta) \in \mathcal{W} \times \mathcal{V}} F(\alpha,\beta)
  = \left( \argmin_{\alpha \in\mathcal{W}} g(\alpha) \right)
  \times \left( \argmin_{\beta \in \mathcal{V}} h(\beta)  \right)
\end{align}
As, by construction, we know that any candidate regularizer we consider above satisfies this property, to disprove equivalence, it is sufficient to show that the adversarial gap cannot have this shape. We provide a more detailed proof sketch for Proposition~\ref{prop:no-separability} below.

\begin{tcolorbox}[boxrule=0pt,standard jigsaw, opacityback=0, frame hidden,sharp corners,enhanced,borderline west={1pt}{0pt}{gray},left=6pt,right=0pt,top=0pt,bottom=0pt,boxsep=1pt]
  \begin{proof}[Proof sketch]
    We proceed by contradiction. Let us assume $\Omega_+ \sim_P A(\mathcal{D}; \cdot)$ for all distributions $\mathcal{D}$.
    %Let us show that $\varphi_0$ is constant.
    Let $w_0 \in \mathbb{R}^{d} \setminus \{ 0 \}$
    and ${u = (w_0, w_0, w_1, \ldots, w_1) \in \mathbb{R}^{m \times d}}$,
    where $w_1 \in \mathbb{R}^d$ is such that $w_1 \cdot w_0 < 0$.
    Define
    ${\mathcal{U} = \{ u \} \times \mathcal{V}}$,
    where
    $\mathcal{V} = \mathbb{R}^K \times \mathbb{R}^K \times \{0\}^{(m-2) \times K} \subseteq \mathbb{R}^{m\times K}$.
    Let us fix $x = w_0$, $y \in \mathcal{Y}$,
    and $\mathcal{D} = \{ (x, y) \}$.
    Thus for any $\theta = (u,a) \in \mathcal{U}$ we have $f(\theta, x) = (a_0 + a_1)\, \sigma(w_0 \cdot x)$,
    and
    \[A(\mathcal{D}; \theta) = \lVert \partial_z \ell(y, f(\theta, x)) \cdot
    (a_0 \otimes w_0 + a_1 \otimes w_0) \rVert_*,  \]
    where $\partial_z$ denotes the partial derivative w.r.t the second component of $\ell$.
    When $a_1 = - a_0$, this yields
    $A(\mathcal{D}; \theta) = 0$. In particular, this means that ${(u, (a_0, -a_0, 0 \ldots)) \in \argmin_{\theta \in \mathcal{U}} A(\mathcal{D}; \theta)}$.
    But by \Proposition{aligned-optima},
    $\argmin_{\theta \in \mathcal{U}} A(\mathcal{D}; \theta) = \argmin_{\theta \in \mathcal{U}} \Omega_+ (\theta)$. Furthermore, by separability of the candidate,
    $\argmin_{\, \mathcal{U}} \Omega_+
    = \{ u \} \times \left( \argmin \varphi_0 \times \argmin \varphi_1 \times \{0\} \right)$, hence we have ${ a_0 \in \argmin \varphi_0 }$ and ${ (-a_0) \in \argmin \varphi_1}$. Since this observation is valid for all choices $a_0 \in \mathbb{R}^k$, we get $(\argmin \varphi_0) = \mathbb{R}^K = (\argmin \varphi_1)$, therefore both $\varphi_0$ and $\varphi_1$ are constant.

\medskip
    Finally, writing $\gamma = \sigma(w_0 \cdot x) \in \mathbb{R}_+^*$
    it suffices to choose $a_0 \in \mathbb{R}^K$ such that $\partial_z \ell(y, \gamma\,a_0) \cdot a_0 \neq 0$ (a construction is shown in \Lemma{nontriviality-ce-grad}), and
    observe that ${\theta := (u, (a_0, 0, \ldots 0)) \in \argmin_\mathcal{U} \Omega_+}$
    but $A(\mathcal{D}; \theta) = \lvert \partial_z \ell(y, \gamma\, a_0) \cdot a_0 \rvert \cdot \lVert w_0 \rVert_* \neq 0 = \min_\mathcal{U} A(\mathcal{D}; \cdot)$ thus $\theta \notin \argmin_\mathcal{U} A(\mathcal{D}; \cdot)$.
    Thus minima do not coincide
    ${[\,\argmin_\mathcal{U} \Omega_1 \neq \argmin_\mathcal{U} A(\mathcal{D}; \cdot)\,]}$,
    contradicting \Proposition{aligned-optima}.
\end{proof}
\end{tcolorbox}

\subsection{Empirical rejection of separable candidates for more general architectures}
\label{sec:emprejection1}

From the empirical standpoint, the same type of separability argument can be used to disprove equivalence between simple regularizers and the adversarial gap
in much broader settings. Indeed, when a function $F$ is separable, it satisfies~\eqref{eq:separablefunction},
hence for any two origin points $\beta_0$ and $\beta_1$, we get
\[ \argmin_{(\alpha, \beta) \in \mathcal{W} \times \{ \beta_k \}} F(\alpha, \beta)
= \left( \argmin_{\alpha \in \mathcal{W}} g(\alpha) \right) \times \{ \beta_k \} \quad\text{ for all } k \in \{0,1\}\]
Therefore, if we consider the sets $\mathcal{U}_0= \mathcal{W} \times \{ \beta_0 \}$ and $\mathcal{U}_1 = \mathcal{W} \times \{ \beta_1 \}$,
which we call ``slices'',
the left projection of both
$(\argmin_{\,\mathcal{U}_0} F)$ and $(\argmin_{\,\mathcal{U}_1} F)$
are identical (they are both exactly $\argmin_\mathcal{W} g$)
without any assumption on $g$. This observation is useful because empirically we can use this property to verify the equivalence property at a reasonable cost if $(\dim \mathcal{W})$ is small.

\textbf{Full separability}.
The simplest example is full separability
such as $\Omega_+(w,a) = \lVert w \rVert_p^p + \lVert a \rVert_p^p$,
which is separable with respect to each parameter individually,
thus we can choose $(\dim\mathcal{W}) = 1$
and $\mathcal{V} =\mathbb{R}^{n-1}$ where $n = m\, (d+k)$ for the remaining parameters.
Then, freezing all parameters but one, we can numerically observe the difference between $\Omega_+$ and $A(\mathcal{D}, \cdot)$ on the slice.
In this case, we can easily check (visually or numerically) whether both minima coincide on the slice $\mathcal{W} \times \{ \beta \}$.

\textbf{Weaker separability.} In the example of neuron-separability, visualization is a little more complicated. Indeed $\Omega_+(w,a) = \psi(w) + \sum_{i \in [m]} \varphi_i(a_i)$, each $\varphi_i : \mathbb{R}^K \to \mathbb{R}$
gives rise to a separability with $(\dim \mathcal{W}) = K$ corresponding to a single
$a_i$ for some $i \in [m]$.
This is a problem for visualizations if $K$ is large, but can be circumvented by choosing
a lower-dimensional space to compute the argmin.
Indeed, it suffices to pick an affine subspace of dimension $q \in \{ 1, 2\}$
in $\mathcal{W}$, that is to say
$\mathcal{W}_0 = \{ u_0 + \sum_{i \in [q]} \alpha_i \,v_i \mid \alpha \in \mathbb{R}^q \}$,
for directions $(v_i)_{i \in [q]} \in \mathcal{W}^q$ and an origin $u_0 \in \mathcal{W}$.
\linebreak[2]
By the same argument, we can check if the left projection of the argmin remains constant
between slices.
Note that in this case the origin $u_0 \in \mathcal{W}$ must be held fixed as $\beta \in \mathcal{V}$ varies.
In the case of neuron separability, this means that the weights $a_{i} \in \mathbb{R}^K$
of the $i$-th neuron must be held fixed, while the rest of the parameters
$\beta = (w \in \mathbb{R}^{m \times d}, (a_j)_{j\neq i} \in \mathbb{R}^{(m-1) \times K})$
can be varied.
We can plot the adversarial gap on each slice $\mathcal{W}_0 \times \{ \beta \}$,
where each choice of $\beta$ corresponds to a new slice.
For instance $\mathcal{W}_0 = \mathbb{R} \times \mathbb{R} \times \{ (a_{i,j})_{j \in [K] \setminus \{0,1\}} \} \subset \mathbb{R}^K$
can be visualized as a heatmap.

\textbf{Experimental verification in two-layer setting.}
To confirm by experiments the existence of the displacement of minima proven above
in two-layer networks, with data distributions more similar to standard configurations,
we perform adversarial training via 10-step PGD of
a two-layer network of width $m = 10^3$ (with bias weights) on CIFAR-10 classification \citep{krizhevsky2009learning},
After $t_0 = 10$ epochs, with freeze the first neuron's output weights $a_0 \in \mathbb{R}^K$,
and observe the value of $A(\mathcal{D}; \cdot)$ on the slice
$\mathcal{W}_0 \times \{ \beta{(t)} \}$ for $t \in \{ 10, 20, 30, 40, 50 \}$ the number of epochs
of training, where
$\mathcal{W}_0 = \{ a_0{(t_0)} + \alpha \, v_0 \mid \alpha \in \mathbb{R} \} \subset \mathbb{R}^K$
is a one-dimensional slice using $a_0{(t_0)}$ the frozen first neuron's weight,
$v_0$ a random direction in $\mathbb{R}^K$ drawn from $\mathcal{N}(0, I_K / K)$, and $\beta{(t)}$ denotes the rest of learned weights
at epoch $t$.
Results are shown in \Figure{projected-2lp}, and show that the left-projection
of the argmin is not constant as training progress. Therefore, even for the specific CIFAR-10 dataset, no separable regularization can be equivalent to the adversarial gap.

\textbf{Experimental rejection of separability in Wide ResNets.}
To push the experiment to more realistic architectures, we perform a similar
slice comparison with Wide-ResNet of depth 34 and 10\textrm{x} widening
\citep{zagoruyko2016wide}
on CIFAR-10 classification.
Although our proof only rigorously covers two-layer networks, the tools leveraged
are only the ability to turn on different sets of neurons, which seems reasonable
even in deeper architectures, and as such we would expect this impossibility of
equivalence to extend naturally.
To confirm this intuition, we perform adversarial training on this larger for
$t_0 = 10$ epochs with 10-step PGD, then freeze the first neuron of the last layer
and continue training for 10 more epochs.
We observe two slices $\mathcal{W}_0 \times \{ \beta(t_0) \}$ and
$\mathcal{W}_0 \times \{ \beta(t_1) \}$ for $t_1 = 20$ and
$\mathcal{W}_0 = \{ a_0(t_0) + \alpha_0 v_0 + \alpha_1 v_1 \mid (\alpha_0, \alpha_1) \in \mathbb{R}^2\} \subset \mathbb{R}^K$ where $v_i = (\mathds{1}_{j=i})_{j \in [K]}$ for $i \in \{0,1\}$
are the first two canonical basis vectors of $\mathbb{R}^K$.
The value of $A(\mathcal{D}; \cdot)$ on the slices is shown in \Figure{wideresnet-E10} and \Figure{wideresnet-E20} (each with 51\textrm{x}51 pixels for $(a_{0,i}(t_0) + \alpha_i v_i) \in [-0.5, +0.5]$,
frozen weight ${(a_{0,i}(t_0))_{i \in \{0,1\}}} \in \mathbb{R}^2$ corresponding to ${\alpha = (0,0)}$ depicted by a white cross).
Again we observe that the minimum of $A(\mathcal{D}; \cdot)$ is not constant when $\beta$ varies, thus no separable regularization can be equivalent to the adversarial gap with this combination of dataset and architecture either.

\begin{figure}[H]
  \centering
  \begin{subfigure}{.32\textwidth}
      \begin{tikzpicture}
        \begin{axis}[
            width=4.4cm,
            height=4.0cm,
            xmin=-0.04, xmax=0.0,
            xtick={-0.04, -0.02, 0.00, 0.02},
            minor x tick num = 1,
            scaled x ticks=false,
            ymin = -0.005, ymax = +0.02,
            ymajorticks = false,
            grid=both,
            grid style={draw=gray!25},
            xticklabel style={
                /pgf/number format/fixed,
                /pgf/number format/precision=3,
            },
            legend style={font=\small, at={(1.02,+1.0)}, anchor=north west},
            xlabel = {Displacement $\alpha$},
            ylabel = {Rescaled value},
            legend image code/.code={
                \draw[mark repeat=2,mark phase=2]
                plot coordinates {
                    (0cm,0cm)
                    (0.15cm,0cm)        %% default is (0.3cm,0cm)
                    (0.3cm,0cm)         %% default is (0.6cm,0cm)
                };%
            }
        ]
        \addplot+[solid, tab1, very thick, mark=none] table [x=scalar, y=E10, col sep=comma]{mlp_r0.02_k50_D12.csv};
        \addplot+[solid, tab2, very thick, mark=none] table [x=scalar, y=E20, col sep=comma]{mlp_r0.02_k50_D12.csv};
        \addplot+[solid, tab3, very thick, mark=none] table [x=scalar, y=E30, col sep=comma]{mlp_r0.02_k50_D12.csv};
        \addplot+[solid, tab4, very thick, mark=none] table [x=scalar, y=E40, col sep=comma]{mlp_r0.02_k50_D12.csv};
        \addplot+[solid, tab5, very thick, mark=none] table [x=scalar, y=E50, col sep=comma]{mlp_r0.02_k50_D12.csv};

        \legend{$t=10$, $t=20$, $t=30$, $t=40$, $t=50$};

        \end{axis}
      \end{tikzpicture}
      \caption{$A(\mathcal{D}; \cdot)$ on two-layer network trained on CIFAR-10 by AT-PGD.}\label{fig:projected-2lp}
  \end{subfigure}\hfill{}%
  \begin{subfigure}{.30\textwidth}
    \centering
    \resizebox{.99\textwidth}{!}{\includesvg{wrn_train_E0000_s0.50_Tavg_norm_pix.svg}}
    \caption{$A(\mathcal{D}; \cdot)$ on WRN-34-10 slice at 10 epochs of AT-PGD}\label{fig:wideresnet-E10}
  \end{subfigure}\hfill{}%
  \begin{subfigure}{.30\textwidth}
    \centering
    \resizebox{.99\textwidth}{!}{\includesvg{wrn_train_E0010_s0.50_Tavg_norm_pix.svg}}
    \caption{$A(\mathcal{D}; \cdot)$ on WRN-34-10 slice at 20 epochs of AT-PGD}\label{fig:wideresnet-E20}
  \end{subfigure}
  \caption{
  Level sets of $A(\mathcal{D}; \cdot)$ induced by the $\ell_\infty$-norm
  on CIFAR-10 with a two-layer network (left) and Wide ResNet (right).
  The first output neuron of each network is frozen after 10 epochs to
  allow for comparable slices.
  Learned weights in Wide ResNet indicated by a white cross.}
  \label{fig:resnet}
\end{figure}

\section{Rejecting based on misalignment of gradients}\label{sec:gradient-misalignment}

The restriction to subsets $\mathcal{U} \subsetneq \Theta$ in Proposition~\ref{prop:aligned-optima} enables the construction of simple counterexamples and facilitates a visual search for non equivalence in more general architectures. By further localizing the analysis to neighborhoods of a single point $\theta \in \Theta$, we can rule out significantly larger classes of candidates through analytical arguments. The second property we introduce for excluding broader classes of candidate regularization is the fact that equivalent functions have aligned gradients. This property is formalized in Proposition~\ref{prop:cosine}, and its complete proof is deferred to \Appendix{aligned-gradients-proof}.

\vspace{.1cm}
\begin{restatable}[Equivalence implies local alignment of gradients]{proposition}{alignedgradientsprop}\label{prop:cosine} Let  $n = \text{dim}(\Theta)$ be the total number of parameters.
  If $\Omega_1 \sim_P \Omega_2$ then there exists a set $S \subset \Theta$ of Lebesgue
  measure zero\footnote{
    The excluded set $S$ corresponds to points where either gradient may be undefined, for instance if $\Omega_1 = \lVert \cdot \rVert_1$
    then $\{ u \in \mathbb{R}^n \mid \exists i \in [n], u_i = 0\} \subseteq S$.
    Functions in $\mathcal{F}$ are continuous, so evaluation can be extended by continuity, but gradients cannot, they may be undefined
    on a small (Lebesgue measure zero) set.
  },
  such that  for all $Q \in \mathbb{R}^{n \times n}$ symmetric positive semi-definite (PSD), it holds
  \[
  \forall \theta \in \Theta \setminus S, \quad
  \langle \,\nabla \Omega_1(\theta), \nabla \Omega_2(\theta) \rangle_Q^2
  = \lVert \nabla \Omega_1(\theta) \rVert_Q^2 \,
  \lVert \nabla \Omega_2(\theta) \rVert_Q^2
  \]
\end{restatable}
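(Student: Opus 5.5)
Since $\Omega_1 \sim_P \Omega_2$, the definition of $P$ in \eqref{eq:propdefinition} gives some $\lambda \in \mathbb{R}_+^*$ and $c \in \mathbb{R}$ with $\Omega_2 = \lambda \cdot \Omega_1 + c$ on all of $\Theta$. The plan is to differentiate this identity, which shows the two gradients are collinear wherever both exist. The claimed equality is then the equality case of the Cauchy--Schwarz inequality for the semi-inner product $\langle \cdot,\cdot \rangle_Q$.

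\textbf{Choice of $S$.} Since $\Omega_1, \Omega_2 \in \mathcal{F}$, each is differentiable outside a Lebesgue-null set; call these sets $S_1$ and $S_2$. We take $S := S_1 \cup S_2$, which is a union of two null sets and hence null.

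Note that $S$ does not depend on $Q$, which matches the order of quantifiers in the statement. In fact $S_2 = S_1$, because $\Omega_2 - c = \lambda\Omega_1$ with $\lambda > 0$, so differentiability of one function is equivalent to differentiability of the other. The union is still the safe definition.

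\textbf{Gradient identity.} At any $\theta \in \Theta \setminus S$, both functions are differentiable. Differentiating the affine relation is legitimate: constants have zero gradient, and scalar multiplication commutes with differentiation. This gives
\[
\nabla \Omega_2(\theta) = \lambda \, \nabla \Omega_1(\theta).
\]

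\textbf{Conclusion.} Fix any symmetric PSD $Q$ and write $g = \nabla\Omega_1(\theta)$. Bilinearity of $\langle u,v\rangle_Q = u^T Q v$ gives
\[
\langle g, \lambda g\rangle_Q^2 = \lambda^2 \lVert g\rVert_Q^4 = \lVert g\rVert_Q^2 \, \lVert \lambda g\rVert_Q^2 .
\]
Here $\lVert \lambda g\rVert_Q^2 = \lambda^2 \lVert g \rVert_Q^2$. This is exactly the desired equality, and it holds even when $\lVert g\rVert_Q = 0$, so no non-degeneracy of $Q$ is needed.

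\textbf{Main obstacle.} There is no real obstacle. The only point needing care is the measure-zero bookkeeping: we must check that $S$ can be chosen independently of $Q$ and $\theta$, and that differentiability of $\Omega_1$ transfers to $\Omega_2$ through the affine relation. Both points are immediate from the argument above.
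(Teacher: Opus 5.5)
Your proof is correct and matches the paper's argument: differentiate $\Omega_2 = \lambda\,\Omega_1 + c$ to get $\nabla\Omega_2 = \lambda\,\nabla\Omega_1$ wherever both gradients exist, then expand both sides by bilinearity of $\langle\cdot,\cdot\rangle_Q$. Your explicit choice of the null set $S = S_1 \cup S_2$, independent of $Q$, is bookkeeping that the paper leaves implicit, not a different route.
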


This proposition states that two equivalent functions satisfy a saturated Cauchy-Schwarz inequality with respect to the Mahalanobis semi-norm induced by $Q$. This can be interpreted as an alignment of gradients in the case where $Q = I_n$, or more generally as an alignment of the projections of the gradients when $Q \neq I_n$. This proposition is useful for our theoretical analysis as it will allows us to reject all data independent as well as all the loss-modulated candidates we mentioned in Section~\ref{sec:intro}.
For instance $Q((w,a) , (w', a')) := \sum_{i \in [m]} \sum_{j \in [K]} a_{i,j} a'_{i,j}$ restricts attention to the second-layer.

\subsection{Rejecting data-independence via gradient alignment}
\label{sec:rejecting-data-independence}

Leveraging this more sophisticated tool, we can dismiss a much larger class of candidate regularizers.
\begin{equation}\label{def-candidate-2}\tag{C2}
  \text{Any data-independent function} \quad \Omega_+ : \Theta \to \mathbb{R}
\end{equation}
Data-independence alone is sufficient to show that no such function can be equivalent to adversarial training, under mild assumptions of having at least $d \geq 3$ input dimensions and $m \geq 3$ neurons.

\smallskip
\begin{restatable}{proposition}{propnodataindependence}\label{prop:no-data-independence}
    Let $\Omega_+ : \Theta \to \mathbb{R}$.
    If $m \geq 3$ and $d \geq 3$, there exists $\mathcal{D}$ such that
    $\Omega_+ \not\sim_P A(\mathcal{D}; \cdot)$
\end{restatable}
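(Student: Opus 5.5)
We argue by contradiction using \Proposition{cosine}. Suppose $\Omega_+ \sim_P A(\mathcal{D}; \cdot)$ for every distribution $\mathcal{D}$. Since $\sim_P$ is an equivalence relation, this forces $A(\mathcal{D}_1;\cdot) \sim_P A(\mathcal{D}_2;\cdot)$ for any two distributions $\mathcal{D}_1, \mathcal{D}_2$, so their gradients must be aligned (in every $Q$-seminorm) outside a null set. The goal is therefore to exhibit two Dirac distributions $\mathcal{D}_1 = \{(x_1,y)\}$ and $\mathcal{D}_2 = \{(x_2,y)\}$, together with an \emph{open} set of parameters $\theta$, on which $\nabla_\theta A(\mathcal{D}_1;\theta)$ and $\nabla_\theta A(\mathcal{D}_2;\theta)$ are not colinear. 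An open set has positive Lebesgue measure, so it cannot be contained in the exceptional set $S$. Working on an open set rather than at a single point is what makes the measure-zero caveat of \Proposition{cosine} harmless.

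\textbf{Construction.} I would use the neuron activation pattern, reusing the ideas of \Proposition{no-separability}. Because $d \geq 3$, pick first-layer weights $w_0,w_1,w_2$ and inputs $x_1,x_2$ such that:
\begin{itemize}
\item on $x_1$, neuron $0$ is active and neuron $1$ is inactive;
\item on $x_2$, neuron $1$ is active and neuron $0$ is inactive;
\item neuron $2$ is inactive on both inputs.
\end{itemize}
All pre-activations should satisfy $\lvert w_i\cdot x\rvert > \varepsilon$, so that by \Assumption{smooth-relu} the network is exactly piecewise linear and this activation pattern persists on an open neighbourhood of $\theta$. Locally we then have
\[
f(\theta,x_1) = a_0\,(w_0\cdot x_1), \qquad A(\mathcal{D}_1;\theta) = \bigl\lVert (w_0\otimes \partial_z\ell(y,f(\theta,x_1)))^{T} a_0 \bigr\rVert_* ,
\]
so that, precisely, $A(\mathcal{D}_1;\theta) = \lvert \partial_z\ell(y,f)\cdot a_0\rvert\,\lVert w_0\rVert_*$. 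In particular this quantity depends only on $(w_0,a_0)$. Symmetrically, $A(\mathcal{D}_2;\theta)$ depends only on $(w_1,a_1)$.

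\textbf{Contradiction.} Take $Q$ to be the projection onto the coordinates $(w_0,a_0,w_1,a_1)$, or simply $Q = I_n$. The gradient of $A(\mathcal{D}_1;\cdot)$ is supported on the $(w_0,a_0)$ block and that of $A(\mathcal{D}_2;\cdot)$ on the $(w_1,a_1)$ block, so the two gradients are orthogonal. The saturated Cauchy--Schwarz identity of \Proposition{cosine} then forces one of them to vanish almost everywhere on the open set. It thus suffices to show that both gradients are nonzero on an open set. For the $w_0$-direction, scaling $w_0 \mapsto t w_0$ scales $\lVert w_0\rVert_*$ and also changes $f$. The cleanest choice is to use $a_0$ with $\partial_z\ell(y,\gamma a_0)\cdot a_0 \neq 0$ from \Lemma{nontriviality-ce-grad}. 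Then the derivative of $A$ along the direction $w_0\mapsto (1+t)w_0$ with $a_0\mapsto a_0/(1+t)$ (which keeps $f$ fixed) equals $\lvert \partial_z\ell\cdot a_0\rvert\,\lVert w_0\rVert_* \cdot 0$. That is useless, so instead I vary $\lVert w_0\rVert_*$ along directions that keep $w_0\cdot x_1$ fixed: since $d\geq 3$, move $w_0$ orthogonally to $x_1$. Along such a direction $f$ is unchanged while the dual norm generically changes. I expect this generic-nonconstancy argument to be the main obstacle, because an arbitrary norm could be locally flat along some directions. The fallback is to show that the function $\theta \mapsto A$ is nonconstant and real-analytic-like on the open piece, so that its gradient is nonzero on a positive-measure subset. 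This holds because $\lvert\partial_z\ell\cdot a_0\rvert$ varies with $a_0$ and $\lVert w_0\rVert_*>0$. The third neuron, together with $d\geq 3$, gives the slack needed to arrange this activation pattern and these orthogonal directions simultaneously.
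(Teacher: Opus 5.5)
Your proposal is correct and follows essentially the same route as the paper: transitivity reduces the claim to $A(\mathcal{D}_0;\cdot)\not\sim_P A(\mathcal{D}_1;\cdot)$ for two Dirac distributions that each activate a single neuron (as in \Lemma{computation-lemma}), so the two gradients have disjoint support and violate \Proposition{cosine} wherever both are nonzero. The only difference is the nonvanishing step: the paper restricts $Q$ to the $(a_0,a_1)$ coordinates and computes $\partial_{a_k}A(\mathcal{D}_k;\theta)$ explicitly at $a_k = v/\sigma(w_k\cdot x_k)$, finding $\xi_y = p_y^2-1<0$, whereas your nonconstancy argument in $a_0$ also works (the map $a_0\mapsto\partial_z\ell(y,c\,a_0)\cdot a_0$ is real-analytic, vanishes at $a_0=0$ but not at the point of \Lemma{nontriviality-ce-grad}, so its gradient is nonzero almost everywhere) and makes your $w_0$-direction detour unnecessary.
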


To establish this, we construct distributions $\mathcal{D}_0$ and $\mathcal{D}_1$
such that ${A(\mathcal{D}_0; \cdot) \not\sim_P A(\mathcal{D}_1; \cdot)}$.
Thus $\Omega_+$ cannot be equivalent to both by transitivity.
This is done by crafting data distributions from special inputs $x_i$
to ``turn on'' exactly one neuron $(w_i,a_i)$,
thus the gradient of the adversarial gap is supported on parameters
of this specific neuron.
Inequivalence follows by Proposition~\ref{prop:cosine}, since gradients with disjoint support cannot be aligned.
Details are a little technical and leverage the two-layer network structure to keep computations analytically manageable, the proof is deferred to \Appendix{no-data-independence-proof}.

\begin{tcolorbox}[boxrule=0pt,standard jigsaw, opacityback=0, frame hidden,sharp corners,enhanced,borderline west={1pt}{0pt}{gray},left=6pt,right=0pt,top=0pt,bottom=0pt,boxsep=1pt]
  \begin{proof}[Proof sketch]
    We sketch only a strict violation of the equality at a single point.
    Extraction of an open set around it where gradients are well defined and violate the condition
    is deferred to appendix.

    \medskip

    Define $w = (w_0, w_1, u, u, \ldots, u) \in \mathbb{R}^{m \times d}$ with
    $(w_0, w_1, u)$ orthonormal and any $a \in \mathbb{R}^{m \times K}$ to form $\theta = (w,a)$.
    Let $x_0 = w_0 - w_1 - u$. For this $x_0$ we have $\sigma(w_0 \cdot x_0) > 0$
    but $\sigma(w_i \cdot x_0) = 0$ for $i \neq 0$, and thus $f(\theta, x_0) = a_0 \,\sigma(w_0 \cdot x_0)$. This means that $x_0$ only ``turns on'' the first neuron of $f$. We can build $x_1$ symmetrically to ``turn on'' the second neuron only.
  \medskip

    Construct distributions $\mathcal{D}_\alpha$ with
    $\{ (x_0, y_0), (x_1, y_1) \}$ and weights ${\alpha_0 \in \interval{0}{1}}$ and ${\alpha_1 =1 - \alpha_0}$,
    having a linearized adversarial gap
    $ {A(\mathcal{D}_\alpha; \theta) = \sum_{i \in \{0,1\}} \alpha_i \, \lVert \partial_z \ell(y_i, f(\theta, x_i)) \cdot \partial_x f(\theta, x_i)\rVert_*}$
    %\vspace{-7pt}
    which is an $\alpha$-weighted average
    of $A(\mathcal{D}_0; \theta)$ and $A(\mathcal{D}_1; \theta)$.
    By construction of $(x_0, x_1)$, $f(\theta, x_i)$ and thus $\partial_x f(\theta, x_i)$
    depend only on $(w_i, a_i)$,
    so the vectors
    $\partial_\theta A(\mathcal{D}_0; \theta)$
    and $\partial_\theta A(\mathcal{D}_1; \theta)$
    are supported respectively on the subspaces $(\partial w_0, \partial a_0)$ and $(\partial w_1, \partial a_1)$,
    which are orthogonal subspaces, since they correspond to disjoint sets of neurons.
    In particular those vectors cannot be aligned. %, which concludes the proof.
  \end{proof}
\end{tcolorbox}

When $\alpha$ varies, the average
$\partial_\theta A(\mathcal{D}_\alpha; \cdot)$ ``rotates'',
thus cannot be aligned with a data-independent vector field.
The proof is restricted to two-layer networks, but this
is naturally a very general construction, which will not be limited
to such networks.
There is exactly one exception to this kind of reasoning:
if the fields induced by all samples are identical,
i.e. if $\partial_x f(\theta, x)$ is independent of $x$,
as in the bilinear models
$(\theta, x) \mapsto \theta \cdot x$ studied in \citet{blanchet2019robust,ribeiro2023regularization}.

\subsection{Rejecting loss-modulated candidates via gradient alignment}
\label{sec:rejecting-loss-modulation}

We have shown that \emph{data-independent}
additive corrections cannot be equivalent to adversarial training.
However, \citet{blanchet2019robust} and \citet{ribeiro2023regularization}
resorted in the case of linear models to
equivalence with a loss-modulated\footnote{The term \textit{modulation} refers to the fact that this
surrogate tends to the original loss when $\delta \to 0$, but the correction for
positive $\delta$ happens \textit{inside} the loss, causing a slightly data-dependent
alteration depending on $\lVert \theta \rVert_*$.}
regularization of the form
${\mathbb{E}[ \,L(y \, \langle x, \theta \rangle - \delta \lVert \theta \rVert_*)\,]}$.
This kind of regularization is equivalent near $\delta \to 0$ to an additive \emph{data-dependent} correction of the form
${- \delta \cdot \mathbb{E}[\, \lvert \nabla L(y \, \langle x, \theta \rangle) \rvert \,] \cdot \lVert \theta \rVert_*}$.
This leads us to consider the class of loss-modulated candidates defined
for any $H : \Theta \to \mathbb{R}^{K \times h}$
and any convex $\phi : \mathbb{R}^h \to \mathbb{R}$ as
\begin{equation}\label{eq:def-candidate-3}\tag{C3}
    \Omega_+(\mathcal{D}; \theta) = \mathbb{E}_{(x,y) \sim \mathcal{D}}\Bigl[
    \phi \Bigl( \partial_z \ell(y, f(\theta, x))
    \cdot H(\theta) \Bigr)
    \Bigr]
\end{equation}
The simpler case $K=1$ and $\phi = \lVert \cdot \rVert_*$ recovers
the form of \citet{ribeiro2023regularization} because
the loss derivative is a scalar
$\partial_z \ell(y, u) \in \mathbb{R}^K = \mathbb{R}$
which can be factored out of the norm, yielding
${\mathbb{E}_{\mathcal{D}}\left[
\lvert \partial_z \ell(y, f(\theta, x)) \rvert \right]
\cdot \phi (H(\theta))}$ hence the parameter-shrinking behavior
when ${\phi(H(\theta)) = \lVert \theta \rVert_*}$.
This reduction is not possible for $K > 1$,
even for linear models.
The parameter-shrinking form $\lVert \theta \rVert_*$
is also restricted to \textit{bilinear} models $f(\theta, x) = \langle x, \theta\rangle$.
The equivalent regularization for
\textit{data-linear} models $f(\theta, x) = \langle x, \Phi(\theta) \rangle$ is
$\lVert \partial_x f \rVert_* = \lVert \Phi(\theta) \rVert_*$.
These are distinct from \textit{parameter-linear}
models $f(\theta, x) = \langle \Psi(x), \theta \rangle$
studied in convex methods, the two types of linearity are
structurally different.

In general, the linearized adversarial
gap is
${A(\mathcal{D}; \theta) = \mathbb{E} [\lVert \partial_z \ell(y, f(\theta, x)) \cdot \partial_x f(\theta, x) \rVert_*]}$.
As previously observed, the data-dependence
of $\partial_x f(\theta, x)$ can only be erased if $f(\theta, x)$ is linear
in $x$ and thus has constant derivative with respect to $x$.
The open question is whether there exists \textit{another}
clever data-independent quantity ${H(\theta) \in \mathbb{R}^{K \times h}}$
such that
% $\theta \mapsto \mathbb{E}[ \, \lVert \partial_z \ell(y,f(\theta, x)) \cdot H(\theta) \rVert \, ]$
$\Omega_+(\mathcal{D}; \cdot)$
is equivalent to $A(\mathcal{D},\cdot)$
even with non-constant gradient, which could yield
a reduced training cost.
We show that there cannot be any such function.
\linebreak[1]
Specifically, we construct for any $(H, \phi)$ a distribution $\mathcal{D}$
with $\Omega_+(\mathcal{D}; \cdot) \not\sim_P A(\mathcal{D}; \cdot)$.
Note that $\Omega_+$ \textit{does} depend on $\mathcal{D}$, and thus escapes the previous
data-independent rejection by transitivity.

\begin{restatable}{proposition}{propnomodulation}\label{prop:no-modulation}
    For $h \geq 1$, let
    ${H : \Theta \to \mathbb{R}^{K \times h}}$ and
    ${\phi : \mathbb{R}^h \to \mathbb{R}}$ be continuous functions
    almost-everywhere differentiable, with $\phi$ convex,
    defining
    ${\Omega_+(\mathcal{D}; \theta) = \mathbb{E}_\mathcal{D}
    [ \phi\left( \partial_z \ell(y, f(\theta, x)) \cdot H(\theta) \right)] }$
    as per \eqref{eq:def-candidate-3}.
    \linebreak[1]
    If $m \geq 3$, and $d \geq 3$
    then there exists $\mathcal{D}$ such that
    ${\Omega_+(\mathcal{D}; \cdot) \not\sim_P A(\mathcal{D}; \cdot)}$.
\end{restatable}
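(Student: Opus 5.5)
We argue by contradiction: suppose that for a fixed pair $(H,\phi)$ we have $\Omega_+(\mathcal{D};\cdot) \sim_P A(\mathcal{D};\cdot)$ for every distribution $\mathcal{D}$. We reuse the construction from the proof of \Proposition{no-data-independence}. Take $w=(w_0,w_1,u,\ldots,u)$ with $(w_0,w_1,u)$ orthonormal, which uses $d\geq 3$ and $m\geq 3$. Choose inputs $x_0,x_1$ such that $x_i$ activates only neuron $i$. Then $f(\theta,x_i)=a_i\,\sigma(w_i\cdot x_i)$ on an open neighbourhood $\mathcal{N}$ of $\theta$, and $\partial_x f(\theta,x_i)=(a_i\otimes w_i)\,\sigma'(w_i\cdot x_i)$. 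Hence $A(\{(x_i,y_i)\};\cdot)$ depends only on $(w_i,a_i)$ on $\mathcal{N}$.

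The key structural observation is that the two sides depend on the parameters in different ways. The candidate $\Omega_+(\{(x_0,y_0)\};\theta)=\phi(\partial_z\ell(y_0,a_0\sigma(w_0\cdot x_0))\cdot H(\theta))$ depends on $(w_0,a_0)$ only through the $K$-vector $g_0=\partial_z\ell(y_0,f(\theta,x_0))$. Its remaining $\theta$-dependence goes through $H(\theta)$, which is the same for every sample.

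\textbf{Step 1: single-sample distributions force $H$ to be nearly constant.} Apply \Proposition{cosine} to $\mathcal{D}_0=\{(x_0,y_0)\}$. On $\mathcal{N}\setminus S$, $\nabla_\theta A(\mathcal{D}_0;\cdot)$ is supported on the block $(w_0,a_0)$. Take $Q$ to be the projection onto the parameters of neurons $j\geq 1$. Alignment then forces $\nabla\Omega_+(\mathcal{D}_0;\cdot)$ to vanish on those coordinates. By the chain rule this component equals $\nabla\phi(g_0H)^{\top}g_0\,\partial_{\theta_{j\ge1}}H$. Doing the same with $\mathcal{D}_1$ gives $\nabla\phi(g_1H)^{\top}g_1\,\partial_{\theta_{j\neq1}}H=0$. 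Varying $y_0$ and $a_0$ moves $g_0$ over an open subset of the softmax-gradient hyperplane, using \Lemma{nontriviality-ce-grad}. One would like to conclude that $H$ is locally a function of a restricted set of coordinates, or that $\phi$ is locally constant along the relevant directions.

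I expect this to be the main obstacle. $\phi$ is only convex and almost everywhere differentiable, so $\nabla\phi$ may vanish, for instance at a minimizer. I would therefore first treat the degenerate case separately. If $\nabla\phi(g_0H)=0$ on an open set, then $\Omega_+(\mathcal{D}_0;\cdot)$ is locally constant while $A(\mathcal{D}_0;\cdot)$ is not, since its $a_0$-gradient is nonzero by the same nontriviality lemma. That contradicts \Proposition{cosine} with $Q=I$, because a nonzero gradient cannot be aligned with a zero one under the equality there unless the other vanishes. In fact the conclusion of \Proposition{cosine} holds trivially when one gradient is zero, so I would instead invoke the matching-minima \Proposition{aligned-optima} on a small slice to rule this case out.

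\textbf{Step 2: the mixture.} Use $\mathcal{D}_\alpha$, the $\alpha$-mixture of the two samples. $A(\mathcal{D}_\alpha;\cdot)$ is linear in $\alpha$, and so is $\Omega_+(\mathcal{D}_\alpha;\cdot)$. Take the $a_0$-block via $Q$ equal to the projection onto $a_0$. Neuron $0$ enters $\Omega_+(\mathcal{D}_1;\cdot)$ only through $H$, and by Step 1 this contribution vanishes. Therefore $\nabla_{a_0}\Omega_+(\mathcal{D}_\alpha)=\alpha\,\nabla_{a_0}\Omega_+(\mathcal{D}_0)$, and likewise for $A$. Now use the block $(a_0,a_1)$ jointly. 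Equivalence for $\mathcal{D}_0$ and $\mathcal{D}_1$ gives constants $\lambda_0,\lambda_1$, while equivalence for $\mathcal{D}_{1/2}$ gives a single $\lambda$. Comparing the two blocks forces $\lambda_0=\lambda_1$. I would then break this equality by rescaling: replace $(x_1,y_1)$ by an input with a different activation magnitude $\sigma(w_1\cdot x_1)$. This scales $A$'s dependence through $\lVert w_1\rVert_*$ and $\sigma$ differently from how it scales $\Omega_+$, which sees the input only through $g_1$. I would fix the precise rescaling by choosing $x_1'=c\,x_1$. The logits can then be kept matched by rescaling $a_1$, so that $g_1$ is unchanged while $A$ changes by the factor $c$. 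This yields inconsistent $\lambda$'s and the desired contradiction.
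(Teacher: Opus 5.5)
\textbf{There is a genuine gap: the rescaling step that is supposed to deliver the contradiction does not work.}

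Your Step 1 conclusion is correct. It follows directly from $\nabla\Omega_+(\mathcal{D}_0;\cdot)=\lambda_0\nabla A(\mathcal{D}_0;\cdot)$. Note that \Proposition{cosine}, with $Q$ supported on the coordinates where $\nabla A$ vanishes, only gives $0=0$. The mixture argument then correctly yields $\lambda_0=\lambda_1$.

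The problem comes when you pass to $x_1'=cx_1$ and $a_1'=a_1/c$ to keep $g_1$ fixed. At that point $\Omega_+$ is evaluated at a different parameter, and $H(\theta)$ moves with $a_1$. So ``$\Omega_+$ sees the input only through $g_1$'' does not make it invariant. Concretely, take $h=1$, $\phi=\lvert\cdot\rvert$ and $H(w,a)=\lVert w_1\rVert_*\,a_1$. On your neighbourhood, $\Omega_+(\{(cx_1,y_1)\};\cdot)=A(\{(cx_1,y_1)\};\cdot)$ for every $c\geq 1$. Hence no comparison among neuron-$1$ samples can ever produce inconsistent $\lambda$'s.

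For this pair, the equivalence fails only on the neuron-$0$ sample, through $\partial_{a_1}\Omega_+(\mathcal{D}_0;\theta)=\operatorname{sgn}(g_0\cdot a_1)\lVert w_1\rVert_*\,g_0\neq 0$. So what must be refuted, for arbitrary $(H,\phi)$, is exactly the off-diagonal vanishing you derived in Step 1. You left that open (``one would like to conclude\ldots''). Tellingly, your argument never uses the convexity of $\phi$.

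The paper's proof closes this in two moves.

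\emph{First move: make the $H$-mediated term the same for both samples.} It uses the same label $y$ for both samples and the point $a^0=(\gamma v/\sigma_0,\gamma v/\sigma_1,0,\ldots,0)$, so that $r_0(a^0)=r_1(a^0)$. Then the $H$-mediated term $R^\Omega(j,s)=(r_s\cdot\partial_{a_j}H)\cdot\nabla\phi(r_s\cdot H)$ does not depend on $s$. Off-diagonal vanishing would therefore force $R^\Omega(s,s)=0$. That leaves $\partial_{a_s}\Omega_+(\mathcal{D}_s;\theta)=L^\Omega(s,s)=\sigma_s(\operatorname{diag}(p)-pp^T)H(\theta)\nabla\phi(z)$.

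\emph{Second move: a sign argument on the $y$-coordinate.} It rules out $L^\Omega(s,s)=\lambda_s\,\partial_{a_s}A(\mathcal{D}_s;\theta)$ as follows.
\begin{itemize}
\item Convexity gives $\nabla\phi(z)\cdot z\geq\phi(z)-\phi(0)$, hence $[L^\Omega(s,s)]_y\leq-\sigma_s p_y(\phi(z)-\phi(0))$.
\item Choosing $y$ with $\phi(c_y)\geq\phi(0)$, and using matching minima at $(u,0)$ where $A$ vanishes, gives $\phi(z)\geq\phi(0)$. So $[L^\Omega(s,s)]_y\leq 0$.
\item The choice $\gamma=\log(1/2)$ makes $[\partial_{a_s}A(\mathcal{D}_s;\theta)]_y>0$.
\end{itemize}
Some such sign argument is unavoidable. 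The vector $\partial_{a_s}A(\mathcal{D}_s;\theta)$ lies in the range of $\operatorname{diag}(p)-pp^T$, so no support or rank consideration alone can exclude $L^\Omega(s,s)=\lambda_s\,\partial_{a_s}A(\mathcal{D}_s;\theta)$.
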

The complete proof is deferred to \Appendix{no-modulation-proof}.
All major ideas and derivations are presented in the proof sketch below,
but technicalities on differentiability and non-triviality are delegated to appendix.

\begin{tcolorbox}[boxrule=0pt,standard jigsaw, opacityback=0, frame hidden,sharp corners,enhanced,breakable,borderline west={1pt}{0pt}{gray},left=6pt,right=0pt,top=0pt,bottom=0pt,boxsep=1pt]
  \begin{proof}[Proof sketch]
    By contradiction,
    assume that $\Omega_+(\mathcal{D}; \cdot) \sim_P A(\mathcal{D}; \cdot)$ for all distributions $\mathcal{D}$,
    and let us construct a distribution for which the equality of \Proposition{cosine} is violated.
    We sketch strict violation at a point, extraction of an open set with well-defined
    gradients is deferred to appendix.

\medskip
    Let $(w_0, w_1)$ be two orthonormal vectors in $\mathbb{R}^d$, and extend them into
    $u \in \mathbb{R}^{m \times d}$ such that $u_0 = w_0$ and $u_1 = w_1$, together with
    points $(x_k \in \mathbb{R}^d)_{k \in \{0,1\}}$
    such that for all $j \in [m]$ and $k \in \{0,1\}$,
    \linebreak[2] if $j \neq k$ then $\sigma(u_j \cdot x_k) = 0$.
    For shortness, we write
    $\sigma_s := \sigma(w_s \cdot x_s) \in \mathbb{R}_+^*$ for $s \in \{0,1\}$.

\medskip
    For some $y \in [K]$,
    define the distributions
    $\mathcal{D}_0 = \{ (x_0, y) \}$ and $\mathcal{D}_1 = \{ (x_1, y) \}$.
    By the equivalence assumption, there exists $(\lambda_s \in \mathbb{R}_+^*)_{s \in \{0,1\}}$ such that
    $\forall s \in \{0,1\}, \>
    \partial_\theta \Omega_+(\mathcal{D}_s; \cdot) = \lambda_s \, \partial_\theta A(\mathcal{D}_s; \cdot)$.

\medskip
    Define $r_s : a \mapsto \partial_z \ell(y, f((u,a), x_s)) = \partial_z \ell(y, a_s\,\sigma_s) \in \mathbb{R}^K$,
    having ${\partial_{a_j} r_s(a) = 0}$ when $j \neq s$.
    By definition, $\Omega_+(\mathcal{D}_s, (u,a)) =\phi(r_s(a) \cdot H(u,a))$
    and
    ${ A(\mathcal{D}_s, (u,a))
    % = \psi( r_s(a) \cdot \partial_x f((u,a), x_s))
    = \psi(r_s(a) \cdot (a_s \otimes w_s)) }$.
    Thus compute for any $j \in [m]$ and $s \in \{0,1\}$ the derivatives
    %
    % \begin{equation}\label{eq:short-derivative-computation}\tag{E2}\begin{aligned}
    $$\begin{aligned}
    \frac{ \partial \Omega_+ }{\partial a_j} (\mathcal{D}_s; (u,a))
    = \Bigl( \partial_{a_j} r_s(a) \cdot H(u,a) + r_s(a) \cdot \partial_{a_j} H(u,a) \Bigr)
    &\cdot \nabla \phi\bigl(r_s(a) \cdot H(u,a)\bigr)
    \\
    \frac{\partial A}{\partial a_j} (\mathcal{D}_s; (u,a))
    = \mathds{1}_{j=s} \, \Bigl(\partial_{a_s} r_s(a) \cdot (a_s \otimes w_s) + r_s(a) \otimes w_s \Bigr)
    &\cdot \nabla \psi\bigl(r_s(a) \cdot (a_s \otimes w_s)\bigr)
    %\end{aligned}\end{equation}
    \end{aligned}$$

    For all $j \in \{0,1\}$, let us decompose
    $\partial_{a_j} \Omega_+(\mathcal{D}_s; \theta) \in \mathbb{R}^K$
    into its left and right terms
    $$L^\Omega(j,s) := (\partial_{a_j} r_s(a) \cdot H(\theta) ) \cdot \nabla \phi(r_s(a) \cdot H(\theta)) \in \mathbb{R}^K$$
    $$R^\Omega(j,s) := (r_s(a) \cdot \partial_{a_j} H(\theta) )  \cdot \nabla \phi(r_s(a) \cdot H(\theta)) \in \mathbb{R}^K$$

    The structure of $L^\Omega$ is diagonal, similarly
    to $\partial_{a_j} A(\mathcal{D}_s; \theta)$.
    However $R^\Omega$ can have a distinct rank-one structure which will be leveraged to prove the contradiction distinguishing $\Omega_+$ and $A$.

    \medskip

    Indeed by previously observed cancellation $\partial_{a_j} r_s(a) = 0$ when $j \neq s$, thus $L^\Omega$ is null off-diagonal.
    However, $R^\Omega$ has a different structure. For a careful choice of $a$
    satisfying $r_s(a) = r_j(a)$
    and $\lVert \partial_{a_s} A(\mathcal{D}_s; (u,a)) \rVert_2^2 \neq 0$,
    we have immediately that
    $R^\Omega(j, s) = R^\Omega(j,j)$ for all $(j,s)$,
    and we can show by calculations leveraging the equivalence assumption
    that $R^\Omega(j,j) \neq 0 \in \mathbb{R}^K$.
    \linebreak[2]
    In particular it follows when $j \neq s$ that
    $\partial_{a_j} \Omega_+(\mathcal{D}_s; \theta) = L^\Omega(j,s) + R^\Omega(j,s) = 0 + R^\Omega(j,j) \neq 0$.

\medskip
    Then, using a PSD form $Q$ projecting to the subspace $(\partial a_0, \partial a_1)$,
    we get by definition
    ${ \pnorm{Q}{\partial_\theta \Omega_+(\mathcal{D}_s; \theta)}^2
    = \sum_{j \in \{0,1\}} \lVert \partial_{a_j} \Omega_+(\mathcal{D}_s; \theta) \rVert_2^2 } $
    and by canceling off-diagonal ($j\neq s$) terms $\partial_{a_j} A(\mathcal{D}_s; \theta) = 0$
    that
    $ { \pnorm{Q}{ \partial_\theta A(\mathcal{D}_s; \theta) }^2
    = \lVert \partial_{a_s} A(\mathcal{D}_s; \theta) \rVert_2^2}$.
    Hence by Cauchy-Schwarz,
    $$\begin{aligned} \langle \partial_\theta \Omega_+(\mathcal{D}_0; \theta), \partial_\theta A(\mathcal{D}_0; \theta) \rangle_Q
        &= \partial_{a_0} \Omega_+(\mathcal{D}_0; \theta) \cdot \partial_{a_0} A(\mathcal{D}_0; \theta)
        + \partial_{a_{1}} \Omega_+(\mathcal{D}_0; \theta) \cdot \partial_{a_{1}} A(\mathcal{D}_0; \theta)
        \\ &= \partial_{a_0} \Omega_+(\mathcal{D}_0; \theta) \cdot \partial_{a_0} A(\mathcal{D}_0; \theta) + 0
         \\ &\leq \lVert \partial_{a_0} \Omega_+(\mathcal{D}_0; \theta) \rVert_2 \cdot \lVert \partial_{a_0} A(\mathcal{D}_0; \theta) \rVert_2
    \end{aligned}$$
    Thus, using this inequality on the numerator and canceling the factors
    $\lVert \partial_{\theta} A(\mathcal{D}_0; \theta) \rVert_Q^2 \neq 0$,
    $$\begin{aligned}
        \frac{ \langle \partial_\theta \Omega_+(\mathcal{D}_0; \theta), \partial_\theta A(\mathcal{D}_0; \theta) \rangle_Q^2}
        {\lVert \partial_\theta \Omega_+(\mathcal{D}_0; \theta) \rVert_Q^2 \, \lVert \partial_\theta A(\mathcal{D}_0; \theta) \rVert_Q^2}
        \leq \frac{ \lVert \partial_{a_0} \Omega_+(\mathcal{D}_0; \theta) \rVert_2^2 }{ \lVert \partial_{a_0} \Omega_+(\mathcal{D}_0; \theta) \rVert_2^2 + \lVert \partial_{a_{1}} \Omega_+(\mathcal{D}_0; \theta) \rVert_2^2 }
        < 1
    \end{aligned}$$
    where the strict inequality follows from the fact that $\partial_{a_1} \Omega_+(\mathcal{D}_0; \theta) \neq 0$ as shown above, thus the alignment equality of \Proposition{cosine} is violated at $\theta$, which concludes the contradiction.
  \end{proof}
\end{tcolorbox}

\paragraph{Empirical rejections by gradient alignment}

The previous arguments formally reject equivalence for a broad class of functions
in two-layer networks.
But these tools can also be used experimentally to disprove equivalence on targeted candidates across architectures.
In \Figure{wrn-cosine}, we measure the cosine similarity
(projected to first or last layer) between the linearized adversarial gap gradient
$\partial_\theta A(\mathcal{D}; \theta_t)$
and a regularizer gradient $\partial_\theta \mathcal{R}(\theta_t)$ along an adversarial training trajectory $(\theta_t)_{t \in \mathbb{N}}$ for WideResNet-34-10 on {CIFAR-10}.
Since this cosine is strictly below one, this experiment proves
that there is no equivalence between the two.
Two choices of $\mathcal{R}$ are pictured:
the $L_2$ and $L_1$ norm.
\begin{figure}[H]
    \centering
    \begin{tikzpicture}
        \begin{axis}[
            grid=both,
            legend style={ at={(0.95,+0.02)}, anchor=south east, font=\small},
            xmin=-1, xmax=61,
            ymin=0.0, ymax=0.5,
            ytick={0, 0.2, 0.4, 0.6, 0.8, 1.0},
            minor y tick num=1,
            yticklabels={00\%, 20\%, 40\%, 60\%, 80\%, 100\%},
            ylabel={accuracy},
            axis y line*=right,
            width=9cm,
            height=3.8cm,
            y label style={at={(+1.12,0.5)}},
        ]

        \addplot[color=black, mark=*, mark options={fill=white}, mark size=2pt, thick] table[
            x=epoch,
            y=adv_accuracy,
            col sep=comma,
            discard if not={model}{wrn-sgd-wd-sched-large3-E1p00},
        ] {wrn-sgd_accuracy.csv};
        \addlegendentry{Adversarial Test Accuracy}
        \end{axis}

        \begin{axis}[
            xlabel={Epoch},
            legend style={
                    at={(-0.25,+0.5)},
                    anchor=east,
                    legend columns=1,
                    },
            grid=none,
            axis y line*=left,
            ymin=-0.5, ymax=+0.75,
            ytick={-1.0, -0.75, -0.5, -0.25, 0, +0.25, +0.5, +0.75, +1.0},
            xmin=-1, xmax=61,
            width=9cm,
            height=3.8cm,
            ylabel={cosine similarity},
            y label style={at={(-0.12,0.5)}},
        ]

        \addplot[color=tab1, mark=*, mark size=2pt, thick] table[
            x=epoch,
            y={l2_cosine_last},
            col sep=comma,
        ] {wrn-sgd-wd-sched-large3_adv.csv};
        \addlegendentry{$L_2$, last layer}

        \addplot[color=tab2, mark=*, mark size=2pt, thick] table[
            x=epoch,
            y={l1_cosine_last},
            col sep=comma,
        ] {wrn-sgd-wd-sched-large3_adv.csv};
        \addlegendentry{$L_1$, last layer}

        \addplot[color=tab3, mark=*, mark size=2pt, thick] table[
            x=epoch,
            y={l2_cosine_conv},
            col sep=comma,
        ] {wrn-sgd-wd-sched-large3_adv.csv};
        \addlegendentry{$L_2$, first layer}

        \addplot[color=tab4, mark=*, mark size=2pt, thick] table[
            x=epoch,
            y={l1_cosine_conv},
            col sep=comma,
        ] {wrn-sgd-wd-sched-large3_adv.csv};
        \addlegendentry{$L_1$, first layer}
        \end{axis}
\end{tikzpicture}
\caption{Alignement of linearized adversarial gap gradients with various regularizations for each layer along adversarial training trajectory (SGD, cosine schedule, $\eta_0 = 10^{-1}$ to $10^{-3}$, 100 epochs).}\label{fig:wrn-cosine}
\end{figure}

\section{Weakening and Bypassing Equivalence}\label{sec:conclusion}

We have shown that constructing a regularizers $\Omega_\delta(\mathcal{D}; \cdot)$ equivalent to adversarial training, in the sense that $\Omega_\delta(\mathcal{D}; \cdot) \sim_P \mathcal{A}_\delta(\mathcal{D}; \cdot)$, is inherently difficult.
In particular, for two layer neural networks, this equivalence cannot be achieved using data-independent regularizations or loss modulations, in contrast to the linear setting of \citet{ribeiro2023regularization}.
This leaves only data-dependent reformulations, such as \citet{roth2020adversarial}, which remain as expensive computationally as adversarial training itself.
Thus the search for cheaper alternatives to adversarial training cannot rely on a direct rewriting of the adversarial gap. We review two weakenings of this ideal property which could be fruitful directions.

\textbf{Weakening equivalence: local control around $(\mathcal{D}_0, \theta_0)$.}
The obstruction to data-independent surrogates is that
$A(\mathcal{D}; \cdot)$ depends non-trivially on the
distribution $\mathcal{D}$. In particular, the adversarial influence
on weights
$\partial_\theta A(\mathcal{D}; \cdot)$ changes direction when
$\mathcal{D}$ is altered. Either the definition of robustness can be weakened to
be more permissive \citep{pang2022robustness}, or a weaker equivalence can be used.
For instance, computing a single simplified equivalent $\Omega_0 \sim_P A(\mathcal{D}_0; \cdot)$, for some $\mathcal{D}_0$
(e.g. standard gaussian) the approximation could remain usable
around $\mathcal{D}_0$, such that $\Omega_0 \approx A(\mathcal{D}; \cdot)$ when $\mathcal{D} \approx \mathcal{D}_0$.

The idea of \textit{local control} has also been
gaining attention in the study of convergence of neural networks
by gradient descent.
Studying an idealized ``typical'' case and characterizing
a behavior allows to transfer this property to ``sufficiently typical''
instances.
For instance, the random initialization $\theta_0$ of a neural network
is well understood, especially for very wide networks, and the
trajectory around ``close enough'' parameters
$\lVert \theta - \theta_0 \rVert \leq R$ is shown to have
good optimization properties, allowing convergence proofs
\citep{chizat2018global,mei2018mean,pham2021global}
and later quantitative convergence speeds
\citep{scaman22a,robin2022convergence} leveraging this control.

\textbf{Bypassing equivalence: over-regularizing for robustness.}
Another option is to maintain a uniform claim over $\mathcal{D}$
by lowering its strength, such as
a regularization that is not equivalent to the adversarial gap,
but does enforce robustness without degrading performance too much. A natural option is the control of the global Lipschitz constant of the network, with estimates \citep{weng2018evaluating} or explicit bounds \citep{scaman2018lipschitz,piat2022towards}, since a sufficiently regular network is robust by construction \citep{luxburg2004distance,bartlett2017spectral,neyshabur2017exploring}.
The major obstacle is that there is no guarantee that learning such
a function is possible. Indeed, many regularization attempts have shown large
decreases in accuracy \citep{atsague2021mutual,
sriramanan2021towards,
wang2024regularization} and several optimization problems \citep{waseda2025rethinking}.

Theory-guided surrogates built with guarantees such as
${\mathcal{L}(\theta_+) \leq (1 + \varepsilon) \cdot \mathcal{L}(\theta^\star)}$
\citep[see][]{bartlett2008classification}
ensure the surrogate $\theta_+$ has manageable suboptimality gap
(even $\varepsilon \approx 1$).
With no such guarantees, it is unclear whether a surrogate would be usable
without \emph{extensive} experiments.
Adversarial training's ``minimal'' perturbation leading to robustness
justified the search for equivalents.

\newpage{}

\bibliography{bibliography}

\begin{thebibliography}{34}
\providecommand{\natexlab}[1]{#1}
\providecommand{\url}[1]{\texttt{#1}}
\expandafter\ifx\csname urlstyle\endcsname\relax
  \providecommand{\doi}[1]{doi: #1}\else
  \providecommand{\doi}{doi: \begingroup \urlstyle{rm}\Url}\fi

\bibitem[Atsague et~al.(2021)Atsague, Fakorede, and Tian]{atsague2021mutual}
Modeste Atsague, Olukorede Fakorede, and Jin Tian.
\newblock A mutual information regularization for adversarial training.
\newblock In Vineeth~N. Balasubramanian and Ivor Tsang, editors,
  \emph{Proceedings of The 13th Asian Conference on Machine Learning}, volume
  157 of \emph{Proceedings of Machine Learning Research}, pages 188--203. PMLR,
  17--19 Nov 2021.
\newblock URL \url{https://proceedings.mlr.press/v157/atsague21a.html}.

\bibitem[Bartlett and Wegkamp(2008)]{bartlett2008classification}
Peter~L. Bartlett and Marten~H. Wegkamp.
\newblock Classification with a reject option using a hinge loss.
\newblock \emph{Journal of Machine Learning Research}, 9\penalty0
  (59):\penalty0 1823--1840, 2008.
\newblock URL \url{http://jmlr.org/papers/v9/bartlett08a.html}.

\bibitem[Bartlett et~al.(2017)Bartlett, Foster, and
  Telgarsky]{bartlett2017spectral}
Peter~L. Bartlett, Dylan~J. Foster, and Matus Telgarsky.
\newblock Spectrally-normalized margin bounds for neural networks.
\newblock In \emph{Proceedings of the 31st International Conference on Neural
  Information Processing Systems}, NeurIPS'17, page 6241–6250, Red Hook, NY,
  USA, 2017. Curran Associates Inc.
\newblock ISBN 9781510860964.

\bibitem[Bartoldson et~al.(2024)Bartoldson, Diffenderfer, Parasyris, and
  Kailkhura]{bartoldson2024adversarial}
Brian~R. Bartoldson, James Diffenderfer, Konstantinos Parasyris, and Bhavya
  Kailkhura.
\newblock Adversarial robustness limits via scaling-law and human-alignment
  studies.
\newblock In \emph{Proceedings of the 41st International Conference on Machine
  Learning}, ICML'24. JMLR.org, 2024.

\bibitem[Biggio et~al.(2013)Biggio, Corona, Maiorca, Nelson, Srndic, Laskov,
  Giacinto, and Roli]{biggio2013evasion}
Battista Biggio, Igino Corona, Davide Maiorca, Blaine Nelson, Nedim Srndic,
  Pavel Laskov, Giorgio Giacinto, and Fabio Roli.
\newblock Evasion attacks against machine learning at test time.
\newblock In \emph{Joint European conference on machine learning and knowledge
  discovery in databases}, pages 387--402. Springer, 2013.

\bibitem[Blanchet et~al.(2019)Blanchet, Kang, and Murthy]{blanchet2019robust}
Jose Blanchet, Yang Kang, and Karthyek Murthy.
\newblock Robust wasserstein profile inference and applications to machine
  learning.
\newblock \emph{Journal of Applied Probability}, 56\penalty0 (3):\penalty0
  830–857, 2019.
\newblock \doi{10.1017/jpr.2019.49}.

\bibitem[Carlini et~al.(2024)Carlini, Nasr, Choquette-Choo, Jagielski, Gao,
  Koh, Ippolito, Tramer, and Schmidt]{carlini2024aligned}
Nicholas Carlini, Milad Nasr, Christopher~A Choquette-Choo, Matthew Jagielski,
  Irena Gao, Pang Wei~W Koh, Daphne Ippolito, Florian Tramer, and Ludwig
  Schmidt.
\newblock Are aligned neural networks adversarially aligned?
\newblock \emph{Advances in Neural Information Processing Systems}, 36, 2024.

\bibitem[Chizat and Bach(2018)]{chizat2018global}
L\'{e}na\"{\i}c Chizat and Francis Bach.
\newblock On the global convergence of gradient descent for over-parameterized
  models using optimal transport.
\newblock In S.~Bengio, H.~Wallach, H.~Larochelle, K.~Grauman, N.~Cesa-Bianchi,
  and R.~Garnett, editors, \emph{Advances in Neural Information Processing
  Systems}, volume~31. Curran Associates, Inc., 2018.
\newblock URL
  \url{https://proceedings.neurips.cc/paper_files/paper/2018/file/a1afc58c6ca9540d057299ec3016d726-Paper.pdf}.

\bibitem[Croce and Hein(2020)]{croce2020reliable}
Francesco Croce and Matthias Hein.
\newblock Reliable evaluation of adversarial robustness with an ensemble of
  diverse parameter-free attacks.
\newblock In \emph{Proceedings of the 37th International Conference on Machine
  Learning}, ICML'20. JMLR.org, 2020.

\bibitem[Goodfellow et~al.(2015)Goodfellow, Shlens, and
  Szegedy]{goodfellow2015explaining}
Ian~J. Goodfellow, Jonathon Shlens, and Christian Szegedy.
\newblock Explaining and harnessing adversarial examples.
\newblock In Yoshua Bengio and Yann LeCun, editors, \emph{3rd International
  Conference on Learning Representations, {ICLR} 2015, San Diego, CA, USA, May
  7-9, 2015, Conference Track Proceedings}, 2015.
\newblock URL \url{http://arxiv.org/abs/1412.6572}.

\bibitem[Krizhevsky and Hinton(2009)]{krizhevsky2009learning}
Alex Krizhevsky and Geoffrey Hinton.
\newblock Learning multiple layers of features from tiny images.
\newblock Technical Report~0, University of Toronto, Toronto, Ontario, 2009.
\newblock URL
  \url{https://www.cs.toronto.edu/~kriz/learning-features-2009-TR.pdf}.

\bibitem[Kumar et~al.(2019)Kumar, Brien, Albert, Viljöen, and
  Snover]{kumar2019failure}
Ram Shankar~Siva Kumar, David~O Brien, Kendra Albert, Salomé Viljöen, and
  Jeffrey Snover.
\newblock Failure modes in machine learning systems, 2019.
\newblock URL \url{https://arxiv.org/abs/1911.11034}.

\bibitem[Li et~al.(2019)Li, Fang, Xu, and Zhao]{li2019inductive}
Yan Li, Ethan~X. Fang, Huan Xu, and Tuo Zhao.
\newblock Inductive bias of gradient descent based adversarial training on
  separable data.
\newblock \emph{ArXiv}, abs/1906.02931, 2019.
\newblock URL \url{https://api.semanticscholar.org/CorpusID:174801103}.

\bibitem[Luxburg and Bousquet(2004)]{luxburg2004distance}
Ulrike~von Luxburg and Olivier Bousquet.
\newblock Distance--based classification with lipschitz functions.
\newblock \emph{J. Mach. Learn. Res.}, 5:\penalty0 669–695, December 2004.
\newblock ISSN 1532-4435.

\bibitem[Madry et~al.(2018)Madry, Makelov, Schmidt, Tsipras, and
  Vladu]{madry2018towards}
Aleksander Madry, Aleksandar Makelov, Ludwig Schmidt, Dimitris Tsipras, and
  Adrian Vladu.
\newblock Towards deep learning models resistant to adversarial attacks.
\newblock In \emph{6th International Conference on Learning Representations,
  {ICLR} 2018, Vancouver, BC, Canada, April 30 - May 3, 2018, Conference Track
  Proceedings}. OpenReview.net, 2018.
\newblock URL \url{https://openreview.net/forum?id=rJzIBfZAb}.

\bibitem[Mei et~al.(2018)Mei, Montanari, and Nguyen]{mei2018mean}
Song Mei, Andrea Montanari, and Phan-Minh Nguyen.
\newblock A mean field view of the landscape of two-layer neural networks.
\newblock \emph{Proceedings of the National Academy of Sciences}, 115\penalty0
  (33), July 2018.
\newblock ISSN 1091-6490.
\newblock \doi{10.1073/pnas.1806579115}.
\newblock URL \url{http://dx.doi.org/10.1073/pnas.1806579115}.

\bibitem[Neyshabur et~al.(2017)Neyshabur, Bhojanapalli, McAllester, and
  Srebro]{neyshabur2017exploring}
Behnam Neyshabur, Srinadh Bhojanapalli, David McAllester, and Nathan Srebro.
\newblock Exploring generalization in deep learning.
\newblock In \emph{Proceedings of the 31st International Conference on Neural
  Information Processing Systems}, NeurIPS'17, page 5949–5958, Red Hook, NY,
  USA, 2017. Curran Associates Inc.
\newblock ISBN 9781510860964.

\bibitem[Paleyes et~al.(2022)Paleyes, Urma, and
  Lawrence]{paleyes2022challenges}
Andrei Paleyes, Raoul-Gabriel Urma, and Neil~D. Lawrence.
\newblock Challenges in deploying machine learning: A survey of case studies.
\newblock \emph{ACM Comput. Surv.}, 55\penalty0 (6), December 2022.
\newblock ISSN 0360-0300.
\newblock \doi{10.1145/3533378}.
\newblock URL \url{https://doi.org/10.1145/3533378}.

\bibitem[Pang et~al.(2022)Pang, Lin, Yang, Zhu, and Yan]{pang2022robustness}
Tianyu Pang, Min Lin, Xiao Yang, Jun Zhu, and Shuicheng Yan.
\newblock Robustness and accuracy could be reconcilable by ({P}roper)
  definition.
\newblock In Kamalika Chaudhuri, Stefanie Jegelka, Le~Song, Csaba Szepesvari,
  Gang Niu, and Sivan Sabato, editors, \emph{Proceedings of the 39th
  International Conference on Machine Learning}, volume 162 of
  \emph{Proceedings of Machine Learning Research}, pages 17258--17277. PMLR,
  17--23 Jul 2022.
\newblock URL \url{https://proceedings.mlr.press/v162/pang22a.html}.

\bibitem[Pham and Nguyen(2021)]{pham2021global}
Huy~Tuan Pham and Phan-Minh Nguyen.
\newblock Global convergence of three-layer neural networks in the mean field
  regime.
\newblock In \emph{International Conference on Learning Representations}, 2021.
\newblock URL \url{https://openreview.net/forum?id=KvyxFqZS_D}.

\bibitem[Piat et~al.(2022)Piat, Fadili, Jurie, and da~Veiga]{piat2022towards}
William Piat, Jalal Fadili, Fr{\'e}d{\'e}ric Jurie, and S{\'e}bastien da~Veiga.
\newblock {Towards an Evaluation of Lipschitz Constant Estimation Algorithms by
  building Models with a Known Lipschitz Constant}.
\newblock In \emph{{Workshop on Trustworthy Artificial Intelligence as a part
  of the ECML/PKDD 22 program}}, Grenoble, France, France, September 2022. {IRT
  SystemX [IRT SystemX]}.
\newblock URL \url{https://hal.science/hal-03773372}.

\bibitem[Ribeiro et~al.(2023)Ribeiro, Zachariah, Bach, and
  Sch{\"o}n]{ribeiro2023regularization}
Antonio~H. Ribeiro, Dave Zachariah, Francis Bach, and Thomas~B. Sch{\"o}n.
\newblock Regularization properties of adversarially-trained linear regression.
\newblock In \emph{Thirty-seventh Conference on Neural Information Processing
  Systems}, 2023.
\newblock URL \url{https://openreview.net/forum?id=K8gLHZIgVW}.

\bibitem[Ribeiro et~al.(2025)Ribeiro, Sch{\"o}n, Zachariah, and
  Bach]{ribeiro2025efficient}
Antonio~H. Ribeiro, Thomas~B. Sch{\"o}n, Dave Zachariah, and Francis Bach.
\newblock Efficient optimization algorithms for linear adversarial training.
\newblock In Yingzhen Li, Stephan Mandt, Shipra Agrawal, and Emtiyaz Khan,
  editors, \emph{Proceedings of The 28th International Conference on Artificial
  Intelligence and Statistics}, volume 258 of \emph{Proceedings of Machine
  Learning Research}, pages 1207--1215. PMLR, 03--05 May 2025.
\newblock URL \url{https://proceedings.mlr.press/v258/ribeiro25a.html}.

\bibitem[Robin et~al.(2022)Robin, Scaman, and Lelarge]{robin2022convergence}
David A.~R. Robin, Kevin Scaman, and Marc Lelarge.
\newblock Convergence beyond the over-parameterized regime using rayleigh
  quotients.
\newblock In S.~Koyejo, S.~Mohamed, A.~Agarwal, D.~Belgrave, K.~Cho, and A.~Oh,
  editors, \emph{Advances in Neural Information Processing Systems}, volume~35,
  pages 10725--10736. Curran Associates, Inc., 2022.
\newblock URL
  \url{https://proceedings.neurips.cc/paper_files/paper/2022/file/45d74e190008c7bff2845ffc8e3facd3-Paper-Conference.pdf}.

\bibitem[Roth et~al.(2020)Roth, Kilcher, and Hofmann]{roth2020adversarial}
Kevin Roth, Yannic Kilcher, and Thomas Hofmann.
\newblock Adversarial training is a form of data-dependent operator norm
  regularization.
\newblock In H.~Larochelle, M.~Ranzato, R.~Hadsell, M.F. Balcan, and H.~Lin,
  editors, \emph{Advances in Neural Information Processing Systems}, volume~33,
  pages 14973--14985. Curran Associates, Inc., 2020.
\newblock URL
  \url{https://proceedings.neurips.cc/paper_files/paper/2020/file/ab7314887865c4265e896c6e209d1cd6-Paper.pdf}.

\bibitem[Scaman and Virmaux(2018)]{scaman2018lipschitz}
Kevin Scaman and Aladin Virmaux.
\newblock Lipschitz regularity of deep neural networks: analysis and efficient
  estimation.
\newblock In \emph{Proceedings of the 32nd International Conference on Neural
  Information Processing Systems}, NeurIPS'18, Red Hook, NY, USA, 2018. Curran
  Associates Inc.

\bibitem[Scaman et~al.(2022)Scaman, Malherbe, and Santos]{scaman22a}
Kevin Scaman, Cedric Malherbe, and Ludovic~Dos Santos.
\newblock Convergence rates of non-convex stochastic gradient descent under a
  generic lojasiewicz condition and local smoothness.
\newblock In Kamalika Chaudhuri, Stefanie Jegelka, Le~Song, Csaba Szepesvari,
  Gang Niu, and Sivan Sabato, editors, \emph{Proceedings of the 39th
  International Conference on Machine Learning}, volume 162 of
  \emph{Proceedings of Machine Learning Research}, pages 19310--19327. PMLR,
  17--23 Jul 2022.
\newblock URL \url{https://proceedings.mlr.press/v162/scaman22a.html}.

\bibitem[Sriramanan et~al.(2021)Sriramanan, Addepalli, Baburaj, and
  R]{sriramanan2021towards}
Gaurang Sriramanan, Sravanti Addepalli, Arya Baburaj, and Venkatesh~Babu R.
\newblock Towards efficient and effective adversarial training.
\newblock In M.~Ranzato, A.~Beygelzimer, Y.~Dauphin, P.S. Liang, and J.~Wortman
  Vaughan, editors, \emph{Advances in Neural Information Processing Systems},
  volume~34, pages 11821--11833. Curran Associates, Inc., 2021.
\newblock URL
  \url{https://proceedings.neurips.cc/paper_files/paper/2021/file/62889e73828c756c961c5a6d6c01a463-Paper.pdf}.

\bibitem[Szegedy et~al.(2014)Szegedy, Zaremba, Sutskever, Bruna, Erhan, and
  ad~Rob~Fergus]{Szegedy2013IntriguingPO}
Christian Szegedy, Wojciech Zaremba, Ilya Sutskever, Joan Bruna, Dumitru Erhan,
  and Ian~Goodfellow ad~Rob~Fergus.
\newblock Intriguing properties of neural networks.
\newblock In \emph{International Conference on Learning Representations}, 2014.

\bibitem[Wang et~al.(2024)Wang, Gao, and Xie]{wang2024regularization}
Jie Wang, Rui Gao, and Yao Xie.
\newblock Regularization for adversarial robust learning.
\newblock \emph{ArXiv}, abs/2408.09672, 2024.
\newblock URL \url{https://api.semanticscholar.org/CorpusID:271903109}.

\bibitem[Waseda et~al.(2025)Waseda, Chang, and Echizen]{waseda2025rethinking}
Futa~Kai Waseda, Ching-Chun Chang, and Isao Echizen.
\newblock Rethinking invariance regularization in adversarial training to
  improve robustness-accuracy trade-off.
\newblock In \emph{The Thirteenth International Conference on Learning
  Representations}, 2025.
\newblock URL \url{https://openreview.net/forum?id=M9SKazbVkJ}.

\bibitem[Weng et~al.(2018)Weng, Zhang, Chen, Yi, Su, Gao, Hsieh, and
  Daniel]{weng2018evaluating}
Tsui-Wei Weng, Huan Zhang, Pin-Yu Chen, Jinfeng Yi, Dong Su, Yupeng Gao,
  Cho-Jui Hsieh, and Luca Daniel.
\newblock Evaluating the robustness of neural networks: An extreme value theory
  approach.
\newblock In \emph{International Conference on Learning Representations}, 2018.
\newblock URL \url{https://openreview.net/forum?id=BkUHlMZ0b}.

\bibitem[Zagoruyko and Komodakis(2016)]{zagoruyko2016wide}
Sergey Zagoruyko and Nikos Komodakis.
\newblock Wide residual networks.
\newblock In \emph{British Machine Vision Conference 2016}. British Machine
  Vision Association, 2016.

\bibitem[Zhang et~al.(2019)Zhang, Yu, Jiao, Xing, Ghaoui, and
  Jordan]{zhang2019theoretically}
Hongyang Zhang, Yaodong Yu, Jiantao Jiao, Eric Xing, Laurent~El Ghaoui, and
  Michael Jordan.
\newblock Theoretically principled trade-off between robustness and accuracy.
\newblock In Kamalika Chaudhuri and Ruslan Salakhutdinov, editors,
  \emph{Proceedings of the 36th International Conference on Machine Learning},
  volume~97 of \emph{Proceedings of Machine Learning Research}, pages
  7472--7482. PMLR, 09--15 Jun 2019.
\newblock URL \url{https://proceedings.mlr.press/v97/zhang19p.html}.

\end{thebibliography}
\bibliographystyle{plainnat}

\newpage
\appendix
\onecolumn

\section{Proofs}

\subsection{Proof of good definition of $P$.}
\label{appendix:definition}

\begin{definition}
Define the set $P \subseteq \mathcal{F}^2 \times \mathbb{R}_+^*$ as
\[
P = \{\, (\Omega_1, \Omega_2, \lambda) \in \mathcal{F}^2 \times \mathbb{R}_+^* \mid
  \exists c \in \mathbb{R}, \> \Omega_2 = \lambda \cdot \Omega_1 + c \,\}
\]
$P$ induces the equivalence relation ${(\Omega_1 \sim_P \Omega_2) \Leftrightarrow (\exists \lambda \in \mathbb{R}_+^*, \, (\Omega_1, \Omega_2, \lambda) \in P)}$.
\end{definition}

\begin{proof}
Let us check that $\sim_P$ is an equivalence relation.
It is reflexive, because $(\Omega, \Omega, 1) \in P$;
%($\forall \Omega \in \mathcal{F}, \Omega \sim_P \Omega$, trivially),
symmetric, because
$(\Omega_1, \Omega_2, \lambda) \in P \Rightarrow (\Omega_2, \Omega_1, \lambda^{-1}) \in P$;
and transitive because if
${(\Omega_1, \Omega_2, \lambda_1) \in P}$ and ${(\Omega_2, \Omega_3, \lambda_2) \in P}$,
then there are constants $c_1, c_2$ such that $\Omega_2 = \lambda_1 \cdot \Omega_1 + c_1$
and $\Omega_3 = \lambda_2 \cdot \Omega_2 + c_2$, therefore
$\Omega_3 = (\lambda_2 \cdot \lambda_1) \cdot \Omega_1 + (\lambda_2 \cdot c_1 + c_2)$
hence $(\Omega_1, \Omega_3, \lambda_1\lambda_2) \in P$.
\end{proof}

\begin{proposition}
    If $(\Phi_n \in \mathcal{F})_{n \in \mathbb{N}}$ and $(\Psi_n \in \mathcal{F})_{n \in \mathbb{N}}$ converge to ${\Phi_n \to \Phi_\infty \in \mathcal{F}}$,
    ${\Psi_n \to \Psi_\infty \in \mathcal{F}}$, and if there exists $(\lambda_n \in \mathbb{R}_+^*)_{n \in \mathbb{N}}$
    such that $(\Phi_n, \Psi_n, \lambda_n) \in P$ and $\lambda \in \Theta(1)$,
    then $\Phi_\infty \sim_P \Psi_\infty$.
\end{proposition}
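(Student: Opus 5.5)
We must prove that if $\Psi_n = \lambda_n \Phi_n + c_n$ with $\lambda_n = \Theta(1)$, and $\Phi_n\to\Phi_\infty$, $\Psi_n\to\Psi_\infty$ pointwise, then $\Psi_\infty = \lambda\Phi_\infty + c$ for some $\lambda>0$ and $c\in\mathbb{R}$. The plan is a compactness argument on the scalar sequence $(\lambda_n)$, followed by recovering the constants $c_n$ from evaluations at a single point.

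\textbf{Step 1 (extract a limit for $\lambda_n$).} The hypothesis $\lambda_n = \Theta(1)$ means there exist $0<m\le M<\infty$ with $m \le \lambda_n \le M$ for all $n$ large enough. By Bolzano--Weierstrass there is a subsequence $(n_k)$ along which $\lambda_{n_k}\to\lambda\in[m,M]$. In particular $\lambda \in \mathbb{R}_+^*$. This is the only place the non-degeneracy assumption enters. Without the lower bound, $\lambda$ could be $0$, which is exactly the degenerate example discussed after Proposition~\ref{prop:convergencepropB}.

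\textbf{Step 2 (control $c_n$).} Fix any $\theta_0\in\Theta$. Then $c_n = \Psi_n(\theta_0) - \lambda_n \Phi_n(\theta_0)$. Along the subsequence, pointwise convergence gives
\[
c_{n_k} \to c := \Psi_\infty(\theta_0) - \lambda\,\Phi_\infty(\theta_0) \in \mathbb{R}.
\]
No separate boundedness assumption on $c_n$ is therefore needed; it is forced by convergence at one point.

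\textbf{Step 3 (pass to the limit everywhere).} For an arbitrary $\theta\in\Theta$, write
\[
\Psi_{n_k}(\theta) = \lambda_{n_k}\Phi_{n_k}(\theta) + c_{n_k}.
\]
The left side tends to $\Psi_\infty(\theta)$. On the right, a product of two convergent real sequences plus a convergent sequence tends to $\lambda\Phi_\infty(\theta)+c$. Uniqueness of limits in $\mathbb{R}$ yields $\Psi_\infty = \lambda\Phi_\infty + c$ pointwise. Since both limits lie in $\mathcal{F}$ by hypothesis, we get $(\Phi_\infty,\Psi_\infty,\lambda)\in P$, hence $\Phi_\infty\sim_P\Psi_\infty$.

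The argument is essentially routine. The only delicate points are interpreting $\lambda_n\in\Theta(1)$ as two-sided bounds, so that the subsequential limit is strictly positive, and noticing that the $c_n$ are determined by one evaluation. Proposition~\ref{prop:convergencepropB} then follows by applying this result along any sequence $\delta_n\to0$. Take $\Phi_n = \mathcal{A}_{\delta_n}(\mathcal{D};\cdot)/\delta_n \to A(\mathcal{D};\cdot)$ and $\Psi_n=\Omega_{\delta_n}/\delta_n\to\Omega_+$, with $(\Phi_n,\Psi_n,\lambda_{\delta_n})\in P$ and constants $c_{\delta_n}/\delta_n$.
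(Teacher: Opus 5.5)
Your proof is correct, but it takes a different route from the paper's.

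\textbf{The paper's route.} The paper splits on whether $\Phi_\infty$ is constant.
\begin{itemize}
\item If $\Phi_\infty$ is not constant, it picks $u,v$ with $\Phi_\infty(u)\neq\Phi_\infty(v)$. For large $n$ it writes $\lambda_n = (\Psi_n(u)-\Psi_n(v))/(\Phi_n(u)-\Phi_n(v))$. So the \emph{whole} sequence $(\lambda_n)$ converges to the corresponding ratio of limits, and the lower half of $\Theta(1)$ is used only to make that limit strictly positive.
\item If $\Phi_\infty$ is constant, it uses only the upper bound: $\lvert\Psi_n(u)-\Psi_n(v)\rvert\le(\sup_n\lambda_n)\lvert\Phi_n(u)-\Phi_n(v)\rvert\to0$. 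Hence $\Psi_\infty$ is constant too, and any $\lambda$ works.
\end{itemize}

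\textbf{Your route.} You use both bounds at once through Bolzano--Weierstrass, then recover the constant from a single evaluation point.
\begin{itemize}
\item This removes the case split and gives a shorter, uniform argument.
\item The price is that $\lambda$ is identified only as a subsequential limit. That suffices for the existential conclusion. When $\Phi_\infty$ is non-constant, every subsequential limit must equal the paper's ratio, so full convergence can be recovered if wanted.
\end{itemize}

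\textbf{What the paper's route buys.} It makes explicit which half of the $\Theta(1)$ hypothesis rules out which degeneracy: $\lambda_n\to0$ is excluded by the lower bound, $\lambda_n\to\infty$ by the upper bound. It also shows that $\lambda_\infty$ is uniquely determined by the limits whenever $\Phi_\infty$ is non-constant.

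Your closing reduction of Proposition~\ref{prop:convergencepropB} to this statement along a sequence $\delta_n\to0$ matches how the paper applies it.
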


\begin{proof}
    Define $c : \mathbb{N} \to \mathbb{R}$ such that $\Psi_n = \lambda_n \Phi_n + c_n$.
    Proceed by case disjunction on $\Phi_\infty$.

    If $\Phi_\infty$ is not constant, then pick $(u,v) \in \Theta$
    such that $\Phi_\infty(u) \neq \Phi_\infty(v)$. After a certain rank, it thus holds that
    $\Phi_n(u) \neq \Phi_n(v)$, and thus ${\lambda_n = (\Psi_n(u) - \Psi_n(v)) / (\Phi_n(u) - \Phi_n(v))}$.
    In particular, by convergence of $(\Psi_n)_n$ and $(\Phi_n)_n$, this implies
    convergence of $(\lambda_n)_n$ to
    ${\lambda_n \to \lambda_\infty := (\Psi_\infty(u) - \Psi_\infty(v)) / (\Phi_\infty(u) - \Phi_\infty(v)) \in \mathbb{R}_+}$. Moreover, since $(\lambda_n)_n \in \Theta(1)$,
    this implies that $\lambda_\infty \in \mathbb{R}_+^*$.
    Therefore $c_n = \lambda_n \cdot \Phi_n(u) - \Psi_n(u) \to c_\infty := \lambda_\infty \cdot \Phi_\infty(u) - \Psi_\infty(u) \in \mathbb{R}$.
    Thus if $\Phi_\infty$ is not constant, then it holds $\Psi_\infty = \lambda_\infty \cdot \Phi_\infty + c_\infty$, thus $(\Phi_\infty, \Psi_\infty, \lambda_\infty) \in P$.

    On the other hand if $\Phi_\infty$ is constant, then let us show that $\Psi_\infty$ is
    constant as well, which will immediately imply $\Psi_\infty = \lambda \Phi_\infty + c$ for any $\lambda$.
    For any $(u,v) \in \Theta^2$,
    observe that ${\Psi_n(u) - \Psi_n(v) = \lambda_n \cdot (\Phi_n(u) - \Phi_n(v))}$, and
    that $(\lambda_n)_n \in \Theta(1)$
    implies that $(\lambda_n)_n$ is bounded above, thus
    ${\lvert \Psi_n(u) - \Psi_n(v) \rvert \leq (\sup \lambda_n) \cdot \lvert \Phi_n(u) - \Phi_n(v) \rvert \to 0}$,
    hence $\Psi_\infty(u) = \Psi_\infty(v)$.

    In both cases, $\Psi_\infty = \lambda_\infty \cdot \Phi_\infty + c_\infty$ for
    some $\lambda_\infty \in I$, thus $(\Phi_\infty, \Psi_\infty) \in P[I]$.
\end{proof}

\convergencepropB*

\begin{proof}
    Apply the previous proposition to $(\Omega_\delta / \delta, \mathcal{A}_\delta(\mathcal{D}; \cdot) / \delta, \lambda_\delta) \in P$, having $(\lambda_\delta)_{\delta} \in \Theta(1)$.
\end{proof}

\subsection{Proof of \Proposition{aligned-optima}: alignment of optima}
\label{appendix:aligned-optima-proof}

\alignedoptimaprop*

\begin{proof}
The proof is immediate by observing that when $\Omega_2 = \lambda \cdot \Omega_1 + c$
for $\lambda > 0$, it holds for all $(u,v) \in \Theta^2$ that
$\Omega_2(u) \leq \Omega_2(v) \Leftrightarrow \Omega_1(u) \leq \Omega_1(v)$.
In particular, their minima coincide.
\end{proof}

\subsection{Proof of \Proposition{cosine}: alignment of gradients}
\label{appendix:aligned-gradients-proof}

\alignedgradientsprop*

\begin{proof}
The proof is immediate after observing that
$\Omega_2 = \lambda \cdot \Omega_1 + c$ for $\lambda > 0$
implies ${\nabla \Omega_2 = \lambda \cdot \nabla \Omega_1}$ therefore
$\nabla \Omega_1 \cdot Q \cdot \nabla \Omega_2 = \lambda \cdot (\nabla \Omega_1 \cdot Q \cdot \nabla \Omega_1)$ and $\nabla \Omega_2 \cdot Q \cdot \nabla \Omega_2 = \lambda^2 \cdot (\nabla \Omega_1 \cdot Q \cdot \nabla \Omega_1)$.
\end{proof}

This property can be used to ignore some coordinates and simplify coordinates.
For any choice of coordinates $I \subseteq [n]$, using vectors $e_i = ( \mathds{1}_{i = j})_{j \in [n]} \in \mathbb{R}^n$, we can construct the projection to the subspace defined by $I$ as $Q = \sum_{i \in I} e_i \, e_i^T \in \mathbb{R}^{n \times n}$,
for which $\langle a, b \rangle_Q = \sum_{j \in I} a_j \, b_j$.

\subsection{Technical lemmas}

\begin{assumption}\label{assumption:smooth-relu}
    $\sigma : \mathbb{R} \to \mathbb{R}$ is a differentiable function such that there exists
    $\varepsilon_0 > 0$ satisfying
    \[ \forall x \in \mathbb{R}, \quad \lvert x \rvert > \varepsilon_0 \Rightarrow \sigma(x) = \max(0, x) \]
\end{assumption}

This assumption simply states that $\sigma$ is \textit{essentially} a ReLU non-linearity, except
on a (non-observable) small neighborhood of zero, ensuring that all gradients used in adversarial training
are well-defined.

\vspace{.3cm}
\begin{lemma}[Differentiability]\label{lem:network-differentiable}
Let $\Theta = \mathbb{R}^{m \times d} \times \mathbb{R}^{m \times K}$,
and let $\sigma : \mathbb{R} \to \mathbb{R}$ be a non-linearity
satisfying Assumption~\ref{assumption:smooth-relu}.
The function $f : \Theta \times \mathbb{R}^d \to \mathbb{R}^K$ defined as
$$ f((w,a), x) = \sum_{i \in [m]} a_i \, \sigma(w_i \cdot x) $$
is continuous and differentiable,
and its derivative with respect to $x$ is
$$ \partial_x f((w,a), x) = \sum_{i \in [m]} a_i \otimes w_i \, \nabla\sigma(w_i \cdot x)
\in \mathbb{R}^{K \times d}$$
\end{lemma}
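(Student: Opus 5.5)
The plan is to work coordinate by coordinate and apply the chain rule to each summand $a_i\,\sigma(w_i\cdot x)$. Continuity and differentiability of $f$ follow once each summand is shown to be continuous and differentiable in $(w,a,x)$, since $f$ is a finite sum of such terms.

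\textbf{Step 1: regularity of $\sigma$.} I would first record what \Assumption{smooth-relu} gives. The function $\sigma$ is differentiable on all of $\mathbb{R}$ by hypothesis. It is therefore continuous, and it coincides with $u\mapsto\max(0,u)$ outside $[-\varepsilon_0,\varepsilon_0]$. No smoothness beyond differentiability is needed for the statement.

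\textbf{Step 2: regularity of $x\mapsto a_i\,\sigma(w_i\cdot x)$.} The map $(w_i,x)\mapsto w_i\cdot x$ is a polynomial, hence differentiable, and its derivative with respect to $x$ is $w_i$. Composing with the differentiable scalar function $\sigma$ gives a differentiable scalar map whose $x$-derivative is $\nabla\sigma(w_i\cdot x)\,w_i$. Multiplying by the vector $a_i\in\mathbb{R}^K$ gives a map into $\mathbb{R}^K$ whose Jacobian in $x$ is the rank-one matrix $a_i\otimes w_i\,\nabla\sigma(w_i\cdot x)\in\mathbb{R}^{K\times d}$. Joint differentiability in $(w,a,x)$ follows in the same way, since the summand is a product of the coordinate $a_i$ with a composition of differentiable maps. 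Summing over $i\in[m]$ and using linearity of the derivative then yields the stated formula for $\partial_x f$.

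\textbf{Main obstacle.} The only delicate point is at $w_i\cdot x\in[-\varepsilon_0,\varepsilon_0]$, where ReLU itself would fail to be differentiable. The assumption that $\sigma$ is differentiable everywhere removes this issue, so no case split is needed. One could add a remark that continuity of $\partial_x f$ in $(\theta,x)$, which the paper's footnote uses to place $A(\mathcal{D};\cdot)$ in $\mathcal{F}$, would additionally require $\sigma'$ to be continuous. I would note this but not include it in the lemma, since the statement only asks for differentiability and the formula.
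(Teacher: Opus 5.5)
Your proposal is correct and follows the paper's proof, which simply states that continuity and differentiability are immediate from differentiability of $\sigma$. You spell out the chain-rule computation giving $\partial_x f((w,a),x)=\sum_{i\in[m]} a_i\otimes w_i\,\nabla\sigma(w_i\cdot x)$ that the paper leaves implicit. Your side remark is also accurate and goes slightly beyond the paper: continuity of $\partial_x f$, which the footnote relies on to place $A(\mathcal{D};\cdot)$ in $\mathcal{F}$, would need $\sigma'$ continuous, not just $\sigma$ differentiable.
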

\begin{proof}
Continuity and differentiability are immediate as a consequence of differentiability of $\sigma$.
\end{proof}

\vspace{.3cm}
\begin{lemma}[Neuron isolation]\label{lem:computation-lemma}
Let $\Theta = \mathbb{R}^{m \times d} \times \mathbb{R}^{m \times K}$,
let $\sigma : \mathbb{R} \to \mathbb{R}$ be a non-linearity satisfying Assumption~\ref{assumption:smooth-relu}.
Define $f : \Theta \times \mathbb{R}^d \to \mathbb{R}^K$ the function
$ f((w,a), x) = \sum_{i \in [m]} a_i \, \sigma(w_i \cdot x) $.

If $d \geq q + 1$, and $m \geq q + 1$, then for any set
$(w_i \in \mathbb{R}^d)_{i \in [q]}$ of orthonormal vectors,
i.e.\ such that $w_i \cdot w_j = \mathds{1}_{i=j}$,
there exists $u \in \mathbb{R}^{m \times d}$
and $(x_i \in \mathbb{R}^d)_{i \in [q]}$
such that $\forall i \in [q], u_i = w_i$ and
such that for any $a \in \mathbb{R}^{m \times K}$
there is an open set $\mathcal{U} \subseteq \Theta$
with $(u, a) \in \mathcal{U}$ and verifying
for all $(u', a') \in \mathcal{U}$ that

\[
\forall (j,k) \in [m] \times [q], \quad
(j \neq k \Rightarrow \sigma(u'_j \cdot x_k) = 0)
\]

\[
\forall k \in [q], \,\quad \partial_x f((u',a'), x_k) = a'_k \otimes u_k'
\]
\end{lemma}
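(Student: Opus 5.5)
We construct $u$ and the points $(x_k)_{k \in [q]}$ explicitly, then obtain the open set $\mathcal{U}$ from continuity and strict margins. Since $d \geq q+1$, we can complete the orthonormal family $(w_i)_{i \in [q]}$ with one more unit vector $v$ orthogonal to all $w_i$. We set $u_i = w_i$ for $i \in [q]$ and $u_j = v$ for all $j \geq q$; this is possible since $m \geq q+1$, so there is at least one such $j$, and the construction also works if there are several. For $k \in [q]$ we define
\[
x_k = c\Bigl(w_k - \sum_{i \in [q],\, i \neq k} w_i - v\Bigr),
\]
with a scale $c > 0$ chosen so that $c > \varepsilon_0$, where $\varepsilon_0$ is the constant of Assumption~\ref{assumption:smooth-relu}. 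By orthonormality:
\begin{itemize}
\item $u_k \cdot x_k = c > \varepsilon_0$;
\item $u_i \cdot x_k = -c < -\varepsilon_0$ for $i \in [q]$, $i \neq k$;
\item $u_j \cdot x_k = -c < -\varepsilon_0$ for $j \geq q$.
\end{itemize}
Hence at the reference point, every pre-activation $u_j \cdot x_k$ lies at distance at least $c - \varepsilon_0 > 0$ outside the interval $[-\varepsilon_0, \varepsilon_0]$, on the correct side.

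Next, we build the open set. The maps $u' \mapsto u'_j \cdot x_k$ are continuous and finitely many, since $(j,k)$ ranges over $[m] \times [q]$. So there is a radius $\rho > 0$ such that for every $u'$ with $\max_j \lVert u'_j - u_j \rVert_2 < \rho$ we still have $u'_k \cdot x_k > \varepsilon_0$ and $u'_j \cdot x_k < -\varepsilon_0$ for $j \neq k$. For example, $\rho = (c - \varepsilon_0)/\max_k \lVert x_k \rVert_2$ works by Cauchy--Schwarz. We take
\[
\mathcal{U} = \{\, (u', a') : \max_j \lVert u'_j - u_j \rVert_2 < \rho \,\} ,
\]
which places no constraint on $a'$. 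It is open and contains $(u,a)$ for every $a$.

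On $\mathcal{U}$, Assumption~\ref{assumption:smooth-relu} gives $\sigma(u'_j \cdot x_k) = \max(0, u'_j \cdot x_k) = 0$ for $j \neq k$, which is the first claim. Since $\sigma$ coincides with the identity on $(\varepsilon_0, \infty)$ and with $0$ on $(-\infty, -\varepsilon_0)$, both open sets, we get $\nabla\sigma(u'_k \cdot x_k) = 1$ and $\nabla\sigma(u'_j \cdot x_k) = 0$ for $j \neq k$. Plugging these into the derivative formula of \Lemma{network-differentiable} gives
\[
\partial_x f((u',a'), x_k) = a'_k \otimes u'_k .
\]
The statement writes $u_k'$, which is consistent with this.

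The only delicate point is making the margins uniform, so that the perturbed pre-activations avoid the smoothed region $[-\varepsilon_0, \varepsilon_0]$ where $\sigma$ is unknown. This is why $c$ is taken larger than $\varepsilon_0$ rather than $c = 1$: otherwise the activation values and derivatives would not be exactly those of ReLU. Everything else is routine.
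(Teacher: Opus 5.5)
Your proof is correct and follows essentially the same construction as the paper: complete the family with one extra orthonormal vector, assign it to all remaining neurons, take $x_k$ as a scaled signed sum so that every pre-activation lies strictly outside $[-\varepsilon_0, \varepsilon_0]$ on the correct side, and extend these strict margins to an open set by continuity. The only differences are cosmetic. The paper fixes the scale at $c = 1+\varepsilon_0$ and builds the open set as a finite intersection of preimages. You instead allow any $c > \varepsilon_0$ and give an explicit Cauchy--Schwarz radius.
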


\begin{proof}
Using $d \geq q +1$, complete the orthonormal set $(w_0, \ldots, w_{q-1}) \in \mathbb{R}^{q \times d}$
into an orthonormal set $(w_0, \ldots, w_q) \in \mathbb{R}^{(q+1) \times d}$
by selecting a unit vector $w_q$ orthogonal to all others.
Since $m \geq q+1$, complete $(w_i)_{i \in [q]} \in \mathbb{R}^{q \times d}$ into
${u = (w_0, \ldots, w_{q-1}, w_q, w_q, \ldots, w_q) \in \mathbb{R}^{m \times d}}$.
Then define $x_k = (1 + \varepsilon_0) \cdot \bigl(w_k - \sum_{i \in [q+1], i \neq k} w_i \bigr)$.
Let us show that for all $i \in [q]$, $u_i \cdot x_i > +\varepsilon_0$
and $\forall j \in [m] \setminus \{i\}$, $u_j \cdot x_i < -\varepsilon_0$.
From there, it will follow that $\sigma(u_j \cdot x_i) = 0$ when $i \neq j$,
and therefore $\partial_x f((u,a), x_k) = a_k \otimes u_k$. It will only remain to note
that since the previous inequalities are strict, they can be extended to an open set around $u$.

For the first statement, let $i \in [q]$
and observe that $
w_i \cdot x_i = (1+\varepsilon_0) \, (w_i \cdot w_i - \sum_{j \in [q+1], j \neq i} w_i \cdot w_j) = (1+ \varepsilon_0)\, \lVert w_i \rVert_2^2 > \varepsilon_0
$.
For the second, proceed by case disjunction. If $j \in [q]$ and $j \neq i$, then
\[\begin{aligned}
w_i \cdot x_j &= (1 +\varepsilon_0) \,\Bigl(w_i \cdot w_j - \sum_{k \in [q+1], \,k \neq j} w_i \cdot w_k \Bigr)
\\ &= (1+\varepsilon_0) \,\Bigl( - w_i \cdot w_i + w_i \cdot w_j - \sum_{k \in [q+1], \,k \notin \{i,j\}} w_i \cdot w_k \Bigr)
\\ &= - (1+\varepsilon_0) \, \lVert w_i \rVert_2^2 < - \varepsilon_0
\end{aligned}\]

On the other hand if $j \geq q$ and $j \neq i$, then
\[
w_i \cdot x_j = w_q \cdot x_j = (1+\varepsilon_0) \,\Bigl( w_q \cdot w_j - \sum_{k \in [q+1], k \neq j} w_q \cdot w_k \Bigr)
= - (1+\varepsilon_0)\lVert w_q \rVert_2^2 < -\varepsilon_0
\]
To check that these properties remain true in an open set,
observe that
we can define for all $i \in [m]$ and $k \in [q]$ the
continuous function $g_{i,k} : (w',a') \mapsto B_{i,k} \cdot w_i' \cdot x_k$
where $B_{i,k} \in \{ \pm 1 \}$ is $B_{i,k} = 1$ if $i = k$ and $B_{i,k} = -1$ otherwise.

Observe that $g_{i,k}(w,a) > \varepsilon_0$ for all $(i,k)$ as shown above.
Let ${\xi := \frac{1}{2} (\min_{i,k} g_{i,k}(w,a)} -\varepsilon_0) > 0$,
and observe that
the preimage ${\mathcal{V}_{i,k} = g_{i,k}^{-1} \left( \, \interval[open]{g_{i,k}(w,a) - \xi}{ g_{i,k}(w,a) + \xi } \, \right)  \subsetneq \Theta}$ is an open set with
$g_{i,k}(\theta) > \varepsilon_0$ for all $\theta \in \mathcal{V}_{i,k}$.
In particular the intersection
$\mathcal{U} = \cap_{i,k} \mathcal{V}_{i,k} \subsetneq \Theta$
is an open set (as intersection of finitely many open sets) such that $g_{i,k}(\theta) > \varepsilon_0$
for all $(i,k)$ and all $\theta \in \mathcal{U}$.

This has two consequences. First for all $(u', a') \in \mathcal{U}$ and $k \in [q]$, it holds that
$u'_k \cdot x_k > + \varepsilon_0$ therefore $\sigma(u'_k \cdot x_k) > 0$ and $\nabla \sigma(u'_k \cdot x_k) = 1$,
and on the other hand for any $i \in [m] \setminus \{k\}$ it holds $u'_i \cdot x_k < - \varepsilon$ therefore
$\sigma(u'_i \cdot x_k) = 0$, which concludes the first claim.
Moreover, by direct computation,
$\partial_x f((u', a'),x_k) = \sum_{i \in [m]} a'_i \otimes u'_i \, \nabla \sigma(u'_i \cdot x_k) = a'_k \otimes u'_k$
which concludes the second claim and thus completes the proof.
\end{proof}

\begin{lemma}[Derivatives of homogeneous functions]\label{lem:homogeneous-trick}
    For $h \in \mathbb{N}$, if $\psi : \mathbb{R}^h \to \mathbb{R}$
    is a 1-homogeneous function, that is to say if it holds
    $\>\forall (\lambda, u) \in \mathbb{R}_+ \times \mathbb{R}^h,  \, \psi(\lambda u) = \lambda \psi(u)$,
    \linebreak[2]
    then for all $u \in \mathbb{R}^h$ such that $\psi$ is differentiable at $u$,
    it holds $u \cdot \nabla \psi(u) = \psi(u)$.
\end{lemma}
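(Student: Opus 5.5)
This is Euler's identity for positively homogeneous functions, and I would prove it by differentiating along the ray through $u$. Fix $u \in \mathbb{R}^h$ at which $\psi$ is differentiable, and define the one-variable function $g : \mathbb{R}_+ \to \mathbb{R}$ by $g(t) := \psi(t u)$. Homogeneity gives $g(t) = t\,\psi(u)$ for all $t \geq 0$. So $g$ is affine on $\mathbb{R}_+$, and its derivative at $t = 1$ is $\psi(u)$. Since $1$ is an interior point of $\mathbb{R}_+$, this is a genuine two-sided derivative.

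Next I compute the same derivative with the chain rule. Differentiability of $\psi$ at $u$ means $\psi(u + v) = \psi(u) + v \cdot \nabla\psi(u) + o(\lVert v \rVert)$. Substituting $v = (t-1)u$ gives
\[
g(t) = \psi(u) + (t-1)\, u \cdot \nabla \psi(u) + o(\lvert t - 1 \rvert),
\]
so $g'(1) = u \cdot \nabla\psi(u)$. Equating the two expressions for $g'(1)$ yields $u \cdot \nabla\psi(u) = \psi(u)$.

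The only point needing care is that homogeneity is assumed only for $\lambda \in \mathbb{R}_+$, not for negative scalars. This causes no trouble, because only values of $t$ in a neighbourhood of $1$, which lies inside $\mathbb{R}_+^*$, are used.

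The case $u = 0$ also deserves a check. Homogeneity gives $\psi(0) = \lambda\,\psi(0)$ for every $\lambda \geq 0$, and taking $\lambda = 0$ shows $\psi(0) = 0$. Both sides of the identity therefore vanish at $u = 0$, provided $\psi$ is differentiable there; the argument above covers this case as well.

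I expect no real obstacle. The only thing to state explicitly is that differentiability is needed at the single point $u$, not on the whole ray.
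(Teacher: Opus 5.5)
Your proof is correct and takes the same route as the paper: differentiate the identity $\psi(\lambda u) = \lambda\,\psi(u)$ with respect to $\lambda$ and evaluate at $\lambda = 1$. The paper first extends differentiability to the whole ray, noting $\nabla\psi(\lambda u) = \nabla\psi(u)$ for $\lambda > 0$. You instead observe, correctly, that since only the derivative at $\lambda = 1$ is needed, differentiability at $u$ alone suffices.
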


\begin{proof}
    Let $u \in \mathbb{R}^h$ such that $\psi$ is differentiable at $u$.
    For any $\lambda \in \mathbb{R}_+^*$, $\psi$ is also differentiable at $(\lambda u) \in \mathbb{R}^h$ since $\psi(\lambda u) = \lambda \psi(u)$, with $\lambda \nabla \psi(\lambda u) = \lambda \nabla \psi(u)$.
    It remains to differentiate the equality $\psi(\lambda u) = \lambda \,\psi(u)$ with respect
    to $\lambda$, to get $u \cdot \nabla \psi(\lambda u) = \psi(u)$.
    Evaluating at $\lambda = 1$ concludes.
\end{proof}

\begin{lemma}[Derivatives of cross entropy]\label{lem:ce-derivatives}
The cross-entropy function $\ell : [K] \times \mathbb{R}^K \to \mathbb{R}$ defined as
$\ell(y, z) = - z_y + \log (\sum_{i \in [k]} \exp(z_i))$
has the following derivatives
\[
\frac{\partial \ell}{\partial z_i}(y, z) = - \mathds{1}_{i = y} + \frac{ \exp(z_i) }{ \sum_{l \in [K]} \exp(z_l) }
\]
\[
\frac{\partial^2 \ell}{\partial z_i \partial z_j}(y, z) = \mathds{1}_{i = j} \frac{\exp(z_i)}{\sum_{l \in [K]} \exp(z_l)} - \frac{\exp(z_i)}{\sum_{l \in [K]} \exp(z_l)} \cdot \frac{ \exp(z_j)}{\sum_{l \in [K]} \exp(z_l)}
\]
A common way to write them more succinctly is
$\partial_z \ell(y, z) = p - \mathds{1}_y$ and
$\partial_z^2 \ell(y,z) = \operatorname{diag}(p) - p p^T$,
where $p \in \Delta_K \subseteq \mathbb{R}_+^K$ is the softargmax\footnote{%
This is often also called a \textit{softmax} in deep learning for historical reasons,
despite not approximating $\max$.
}
of $z$ defined as ${p_i = \exp(z_i) / \sum_{l \in [K]} \exp(z_l)}$.
\end{lemma}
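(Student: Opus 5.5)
The statement to prove is \Lemma{ce-derivatives}, which is a direct computation from the definition $\ell(y,z) = -z_y + \log\bigl(\sum_{l \in [K]} \exp(z_l)\bigr)$. I will first introduce the softargmax $p_i = \exp(z_i)/S$ with $S := \sum_{l \in [K]} \exp(z_l) > 0$. Since $S$ is a finite sum of positive smooth functions, $\ell(y,\cdot)$ is $C^\infty$ on $\mathbb{R}^K$, so all partial derivatives exist and second partials commute.

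\textbf{First derivatives.} The linear term gives $\partial(-z_y)/\partial z_i = -\mathds{1}_{i=y}$. For the log-sum-exp term, the chain rule gives $\partial \log S / \partial z_i = (\partial S/\partial z_i)/S$, and $\partial S/\partial z_i = \exp(z_i)$. Hence $\partial \ell/\partial z_i = -\mathds{1}_{i=y} + p_i$, which in vector form is $\partial_z \ell(y,z) = p - \mathds{1}_y$.

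\textbf{Second derivatives.} The term $-\mathds{1}_{i=y}$ is constant in $z$, so it suffices to differentiate $p_i = \exp(z_i)/S$ with respect to $z_j$. The quotient rule gives
\[
\frac{\partial p_i}{\partial z_j} = \frac{\mathds{1}_{i=j}\exp(z_i)\, S - \exp(z_i)\exp(z_j)}{S^2} = \mathds{1}_{i=j}\, p_i - p_i p_j .
\]
Collecting these entries gives $\partial_z^2 \ell(y,z) = \operatorname{diag}(p) - p p^T$, independent of $y$. This matrix is visibly symmetric, consistent with the smoothness of $\ell$.

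There is no genuine obstacle. The only care needed is bookkeeping: the label $y$ enters only through the constant indicator, so it vanishes at second order, and the index sets $[K]$ and $[k]$ in the statement must be read consistently as the same set of $K$ classes.
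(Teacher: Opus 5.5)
Your proposal is correct and is exactly the routine computation the paper has in mind: the paper states this lemma without any proof, and the chain rule on the log-sum-exp term followed by the quotient rule on $p_i = \exp(z_i)/S$ is the natural and complete argument. The only point you leave implicit is $p \in \Delta_K$, which is immediate since each $p_i > 0$ and $\sum_i p_i = S/S = 1$.
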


\begin{lemma}\label{lem:nontriviality-ce-grad}
    For any $y \in [K]$ and $\gamma \in \mathbb{R}_+^*$, there exists
    $a \in \mathbb{R}^K$ such that $\partial_z \ell(y, \gamma\, a) \cdot a \neq 0$.
\end{lemma}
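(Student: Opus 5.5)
Using \Lemma{ce-derivatives}, the gradient is $\partial_z \ell(y, \gamma a) = p - \mathds{1}_y$, where $p$ is the softargmax of $\gamma a$. The quantity to make nonzero is therefore
\[
g(a) := (p - \mathds{1}_y) \cdot a = \sum_{i \in [K]} p_i a_i - a_y .
\]
The plan is to exhibit one explicit $a$ for which $g(a) \neq 0$, splitting on whether $K \geq 2$. The case $K=1$ is degenerate: then $p = 1 = \mathds{1}_y$, the gradient vanishes identically, and the claim fails. I will therefore state that the lemma, like the rest of the paper, assumes $K \geq 2$, which classification requires.

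For $K \geq 2$, take $a = -\mathds{1}_y$, so that $a_y = -1$ and $a_i = 0$ for $i \neq y$. Then $g(a) = -p_y + 1 = 1 - p_y$. With $z = \gamma a$ we have
\[
p_y = \frac{e^{-\gamma}}{e^{-\gamma} + (K-1)},
\]
which is strictly less than $1$ because $K - 1 \geq 1$. Hence $g(a) = 1 - p_y > 0$, which proves the claim.

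An alternative is $a = \mathds{1}_y$, which gives $g(a) = p_y - 1 < 0$, also nonzero. Either choice works for every $\gamma > 0$.

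The only real obstacle is the degenerate case $K = 1$, which must be excluded explicitly. The remaining steps are a one-line computation.
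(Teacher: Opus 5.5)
Your proof is correct and follows essentially the paper's route: the paper reduces to $\gamma = 1$ by rescaling and takes $a = (\mathds{1}_{i \neq y})_{i \in [K]}$, which differs from your $-\mathds{1}_y$ only by the all-ones vector and so gives the same value $1 - p_y = (K-1)e/(1+(K-1)e)$, because softargmax is shift-invariant and $(p - \mathds{1}_y)\cdot \mathbf{1} = 0$. Your explicit exclusion of $K = 1$ is apt, since the paper's value also vanishes there; the lemma implicitly assumes $K \geq 2$, as the paper does elsewhere.
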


\begin{proof}
Without loss of generality we can show $\partial_z \ell(y, \gamma\, a) \cdot (\gamma \, a) \neq 0$
and absorb $\gamma$ into $a$, therefore it suffices to show the result for $\gamma = 1$.
Define $a = (\mathds{1}_{i \neq y})_{i \in [K]} \in \mathbb{R}^K$.
It holds by \Lemma{ce-derivatives} that
$\partial_z \ell(y, a) = p - \mathds{1}_y$ where $p \in \mathbb{R}_+^K$ is defined as
$p_i = \exp(a_i) / \sum_{j \in [K]} \exp(a_j)$.
Therefore
$$ \partial_z \ell(y, a) \cdot a = \sum_{i \in [K]} \frac{\exp(a_i)}{ \sum_{j \in [K]} \exp(a_j) } \, a_i  - a_y= \frac{(K-1) \, e^1}{1 + (K-1) \, e^1} - 0\neq 0 $$
\end{proof}

\newpage

\subsection{Proof of \Proposition{no-separability} : Rejection of separability}
\label{appendix:no-separability-proof}

\propnosep*

The core idea of the proof is that the set of minima of a separable function
has a cartesian product structure in parameter space. Since adversarial gap is uniquely
defined by the function $f(\theta, \cdot)$ and not by its parameters, it cannot have
such a product structure, because we can construct ``diagonals'' in parameter space
keeping constant the function $f(\theta, \cdot)$. This symmetry distinguishes them.
\vspace{-.2cm}

\begin{proof}[Proof]
    Proceeding by contradiction, assume that $\Omega_{+} \not\sim_P A(\mathcal{D}; \cdot)$.
    Pick ${w_0 \in \mathbb{R}^d \setminus \{ 0 \}}$,
    and define $u = (w_0, w_0, -\frac{1}{2} w_0, \ldots, - \frac{1}{2} w_0) \in \mathbb{R}^{m \times k}$.
    Then, define ${\mathcal{V} = \mathbb{R}^K \times \mathbb{R}^K \times \{ 0 \}^{(m-2) \times K} \subseteq \mathbb{R}^{m\times K} }$
    and $\mathcal{U} = \{ u \} \times \mathcal{V}$.
    Let $x = w_0$ and $y \in \mathcal{Y}$.
    By \Lemma{computation-lemma}, for any $\theta = (u,a) \in \mathcal{U}$,
    it holds ${f(\theta, x) = (a_0 + a_1)\, \sigma(w_0 \cdot x)}$
    and
    $ {\partial_x f(\theta, x) = \sum_{i \in [m]} a_i \otimes w_i\, \nabla \sigma(w_i \cdot x) }
    = {a_0 \otimes w_0 + a_1 \otimes w_0 }$.
    Therefore, for any linear weight ${a = (a_0, a_1, 0, \ldots, 0) \in \mathcal{V}}$,
    on the single-point dataset $\mathcal{D} = \{ (x, y) \}$,
    the linearized adversarial risk is
    $A(\mathcal{D};(u,a)) = \lVert \partial_z \ell(y, f((u,a), x)) \cdot
    (a_0 \otimes w_0 + a_1 \otimes w_0) \rVert_*$.
    \linebreak[2]
    When $a_1 = - a_0$, this reduces to
    $A(\mathcal{D}; (u,a)) = 0$,
    thus $(u, (a_0, -a_0, 0 \ldots)) \in \argmin_{\,\mathcal{U}} A(\mathcal{D}; \cdot)$.
    However by \Proposition{aligned-optima},
    $\argmin_{\,\mathcal{U}} A(\mathcal{D}; \cdot) = \argmin_{\,\mathcal{U}} \Omega_+$
    and furthermore by separability of the candidate $\Omega_+$, it holds that
    $\argmin_{\,\mathcal{U}} \Omega_+
    = \{ u \} \times \left( \argmin \varphi_0 \times \argmin \varphi_1 \times \{0\}^{(m-2) \times K} \right)$.
    Hence $a_0 \in \argmin \varphi_0$,
    and $(-a_0) \in \argmin \varphi_1$
    and since this is valid for all choices $a_0 \in \mathbb{R}^K$,
    it follows that $(\argmin \varphi_0) = \mathbb{R}^K$,
    and $(\argmin \varphi_1) = \mathbb{R}^K$,
    so both $\varphi_0$ and $\varphi_1$ are constant.

    Finally, let $a_0 \in \mathbb{R}^K$ be such that
    $\partial_z \ell(y, a_0 \, \sigma(w_0 \cdot x)) \cdot a_0 \neq 0$
    (a construction is given in \Lemma{nontriviality-ce-grad}.
    Define $\theta = (u, (a_0, 0, \ldots, 0)) \in \mathcal{U} \subset \Theta$,
    and observe that $f(\theta, x) = a_0 \, \sigma(w_0 \cdot x)$
    and $\partial_x f(\theta, x) = a_0 \otimes w_0$.
    Since by definition $A(\mathcal{D}; \theta) = \lVert \partial_z \ell(y, f(\theta, x)) \cdot \partial_x f(\theta,x) \rVert_*$,
    we get ${A(\mathcal{D}; \theta) = \lvert \partial_z \ell(y, a_0 \, \sigma(w_0 \cdot x)) \cdot a_0 \rvert \cdot \lVert w_0 \rVert_* \neq 0 }$
    hence $\theta \notin \argmin_\mathcal{U} A(\mathcal{D}; \cdot)$.
    Since this choice satisfies ${\theta \in \{u\} \times ( \argmin \varphi_0 \times \argmin \varphi_1 \times \{0\}^{(m-2) \times K}) = \argmin_\mathcal{U} \Omega_+}$, we have shown
    $\argmin_\mathcal{U} \Omega_+ \neq \argmin_\mathcal{U} A(\mathcal{D}; \cdot)$
    which contradicts \Proposition{aligned-optima} and concludes the proof.
\end{proof}

\subsection{Proof of \Proposition{no-data-independence} : Rejection of data-independence}
\label{appendix:no-data-independence-proof}

\propnodataindependence*

We will show that there exists datasets $\mathcal{D}_0$ and $\mathcal{D}_1$
such that $A(\mathcal{D}_0; \cdot) \not\sim_P A(\mathcal{D}_1; \cdot)$.
In particular, this implies by transitivity that there exists $k \in \{0,1\}$
such that $\Omega_2 \not\sim_P A(\mathcal{D}_k, \cdot)$.

The core idea is that neurons can become ``deactivated''
(that is to say they have a locally constant response)
which will ``prevent'' gradients of $A(\mathcal{D}, \cdot)$ from ``flowing through them'',
therefore if there are two points activating distinct sets of neurons,
the weighted averaged of gradients induced by these points changes direction
(and even support) under variations of $\mathcal{D}$.
In other words, $A(\mathcal{D}; \cdot)$ depends non-trivially
on the dataset and thus cannot be equivalent to a data-independent function.

\begin{proof}[Proof]
Let $(w_0, w_1) \in \mathbb{R}^d \times \mathbb{R}^d$ be orthonormal vectors.
By \Lemma{computation-lemma}, there exists points $(x_0, x_1) \in \mathbb{R}^d \times \mathbb{R}^d$ and
${u \in \mathbb{R}^{m \times d}}$
with $u_0 = w_0$ and $u_1 = w_1$
such that for all $a \in \mathbb{R}^{m \times K}$,
and on an open set $\mathcal{U} \subseteq \Theta$ around $(u,a) \in \mathcal{U}$, it holds
$\partial_x f((w', a'), x_k) = a'_k \otimes w'_k$ for all $k \in \{0,1\}$
and $\sigma(w_j \cdot x_k) = 0$ if $j \neq k$.
Then, let $(y_0, y_1) \in \mathcal{Y}^2$, and define
for any $\alpha \in \interval{0}{1}$ the distribution $\mathcal{D}_\alpha$
composed of two points:
$(x_0, y_0)$ with weight $\alpha_0 = (1 - \alpha)$ and $(x_1, y_1)$ with
weight $\alpha_1 = \alpha$.

To shorten notations for derivatives of
$\mathcal{L}(\theta, x, y) = \ell(y, f(\theta, x))$,
define the gradient shorthand
${r_k : a \mapsto \partial_u \ell(y_k, f((u,a), x_k)) \in \mathbb{R}^k}$.
Observe then that by chain rule,
$ \partial_x \mathcal{L}(\theta, y_k, x_k) = \partial_z \ell(y_k, f(\theta, x_k)) \cdot \partial_x f(\theta, x_k)$
thus using for $\theta = (u,a)$ the explicit expression $\partial_x f(\theta, x_k) = a_k \otimes w_k$ and the previous shorthand, we get
$\partial_x \mathcal{L}((u,a), y_k, x_k) = (r_k(a) \cdot a_k) \, w_k \in \mathbb{R}^d$.
In particular using the definition of $A(\mathcal{D}; \cdot)$,
it holds that
$A(\mathcal{D}_\alpha; (u,a))
= \sum_{k \in \{0,1\}} \alpha_k \, \lVert ( r_k(a) \cdot a_k) \, w_k \lVert_*$.

Moreover, $A(\mathcal{D}_\alpha; \cdot)$ is differentiable
almost everywhere on $\mathcal{U}$.
Thus the expression of the gradient used below is well-defined almost
everywhere on $\mathcal{U}$, and it suffices to show with this expression that
$\lim_{\theta \to (w,a)} \, \cos_P\left( \partial_\theta A(\mathcal{D}_0; \theta), \partial_\theta A(\mathcal{D}_1; \theta) \right)  < 1$, such
that there is at least one $\theta' \in \mathcal{U}$ for which
$\cos_P \left( \partial_\theta A(\mathcal{D}_0; \theta'), \partial_\theta A(\mathcal{D}_1; \theta') \right) < 1$, and the conclusion follows immediately.

By construction of $u$, it holds for any $k \in \{0,1\}$ that
$f(\theta, x_k) = a_k \, \sigma(w_k \cdot x_k)$, by canceling the other terms
using $\sigma(w_j \cdot x_k) = 0$ when $j \neq k$.
This implies that $r_k(a) = \partial_z \ell(y, a_k \,\sigma(w_k \cdot x_k))$.
In particular, this means $\partial_{a_j} r_k(a) = 0$ for all $j \in [m] \setminus \{k \}$.
By the following straightforward computation, we show that for $j \in [m] \setminus \{k\}$,
it also holds $\partial_{a_j} \left( r_k(a) \cdot a_k \right) = 0$, since
by chain rule
$$
\frac{\partial}{\partial a_j} (r_k(a) \cdot a_k)
= \frac{\partial r_k}{\partial a_j}(a) \cdot a_k + r_k(a) \, \mathds{1}_{j=k}
$$

This can be leveraged to simplify terms in the gradient of $A(\mathcal{D}_\alpha; \cdot)$.
$$\begin{aligned}
\frac{\partial A}{\partial a_j}(\mathcal{D}_\alpha; (u,a))
&= \sum_{k \in \{0,1\}} \alpha_k \frac{\partial}{\partial a_j} \Bigl( \lvert r_k(a) \cdot a_k\rvert \cdot \lVert w_k \rVert_* \Bigr)
\\ &= \sum_{k \in \{0,1\}} \alpha_k
\, \lVert w_k \rVert_*
\, \operatorname{sgn}(r_k(a) \cdot a_k)
\, \frac{\partial \left( r_k(a) \cdot a_k\right)}{\partial a_j}
\\ &= \alpha_j \lVert w_j \rVert_* \operatorname{sgn}(r_j(a) \cdot a_j)
\, \frac{\partial \left( r_j(a) \cdot a_j\right)}{\partial a_j}
\end{aligned}$$

In particular, this implies that for any $a \in \mathbb{R}^{m \times K}$ it holds
$\partial A(\mathcal{D}_1; (u,a)) / \partial a_0 = 0$
which corresponds to $\alpha = (0, 1)$
and
$\partial A(\mathcal{D}_0; (u,a)) / \partial a_1 = 0$
corresponding to $\alpha = (1, 0)$.
To conclude, use $Q$ the PSD form
$Q((u,a), (u', a')) := \sum_{k \in \{0,1\}} \sum_{i \in [K]} a_{k,i} a'_{k,i}$,
which ensures by definition that
\[
\pprod{Q}{\partial_\theta A(\mathcal{D}_0; \theta)}{\partial_\theta A(\mathcal{D}_1; \theta)}
=
\dfrac{\partial A(\mathcal{D}_0; \theta)}{\partial a_0} \cdot
\dfrac{\partial A(\mathcal{D}_1; \theta)}{\partial a_0}
+
\dfrac{\partial A(\mathcal{D}_0; \theta)}{\partial a_1} \cdot
\dfrac{\partial A(\mathcal{D}_1; \theta)}{\partial a_1}
\]
Thus
$\pprod{Q}{\partial_\theta A(\mathcal{D}_0; \theta)}{\partial_\theta A(\mathcal{D}_1; \theta)} = 0$
for all $\theta = (u,a)$.
It only remains to choose $a \in \mathbb{R}^{m \times K}$ such that both
${\pnorm{Q}{\partial_\theta A(\mathcal{D}_0; \theta)} \neq 0}$
and ${\pnorm{Q}{\partial_\theta A(\mathcal{D}_1; \theta)} \neq 0}$,
which will constitute a violation of \Proposition{cosine} and conclude the proof.
Let us construct such a choice of $a$ explicitly.

Let $v = (\mathds{1}_{i\neq y})_{i \in [K]} \in \mathbb{R}^K$, then
choose $a_0 = v / \sigma(w_0 \cdot x_0)$ and $a_1 = v / \sigma(w_1 \cdot x_1)$.
Observe that for any $k \in \{0,1\}$, it holds
$r_k(a) = \partial_z \ell(y, a_k \, \sigma(w_k \cdot x_k))$
thus
$${\partial_{a_k} r_k(a) = \partial_z^2 \ell(y, a_k \, \sigma( w_k \cdot x_k)) \, \sigma(w_k \cdot x_k) = \partial_z^2 \ell(y, v) \, \sigma(w_k \cdot x_k) \in \mathbb{R}^{K \times K}}$$
Therefore using $p := \operatorname{softargmax}(v) \in \mathbb{R}^K_+$ in the expression derived by chain rule above,
together with the closed-form expression for the derivatives of the cross entropy (see \Lemma{ce-derivatives}), we get
$$\begin{aligned}
\xi := \frac{\partial}{\partial a_k} \bigl( r_k(a) \cdot a_k \bigr)
= \partial_z^2 \ell(y, v) \cdot v + \partial_z \ell(y, v)
= \left( p_i v_i - p_i \sum_{j \in [K]} p_j v_j  + p_i - \mathds{1}_{i = y}\right)_{i \in [K]}
\end{aligned}$$
In particular evaluating $v_i = \mathds{1}_{i \neq y}$, we get
$\xi_y = - p_y (1 - p_y) + p_y - 1 = p_y^2 - 1 < 0$
because $p_y < 1$ since $p$ is in the image of $\operatorname{softargmax}$.
Hence $\xi \neq 0$, thus $ \lVert \partial_{a_k} A(\mathcal{D}_k; \theta) \rVert_2^2 = \lVert \xi \rVert_2^2 \neq 0$ and for any $k \in \{0,1\}$
it follows that $\lVert \partial_\theta A(\mathcal{D}_k; \theta) \rVert_Q^2
= \lVert \partial_{a_k} A(\mathcal{D}_k ; \theta) \rVert_2^2 \neq 0$
which concludes the proof.
\end{proof}

\subsection{Proof of \Proposition{no-modulation} : Rejection of loss-modulation}
\label{appendix:no-modulation-proof}

\propnomodulation*

    The core idea for this proof is a combination of the previous two ideas:
$\partial_\theta A(\mathcal{D}; \cdot)$ moves non-trivially with $\mathcal{D}$,
in a way that depends on parameters only through the learned function $f(\theta, \cdot)$.
The loss-modulated form $\Omega_+(\mathcal{D}; \cdot)$ also depends non-trivially
on $\mathcal{D}$, but the ``shape'' of this dependence is somewhat ``separable'' with respect
to $(\mathcal{D}, \theta)$, thus it cannot remain aligned with $A(\mathcal{D}; \cdot)$
as $\mathcal{D}$ varies.

\begin{proof}
    Let us proceed by contradiction.
    Assume that $\Omega_+(\mathcal{D}; \cdot) \sim_P A(\mathcal{D}; \cdot)$ for all distributions $\mathcal{D}$,
    and let us construct a data distribution for which the equality of \Proposition{cosine} is violated.

    Let $(w_0, w_1)$ be two orthonormal vectors in $\mathbb{R}^d$.
    By \Lemma{computation-lemma}, since $m \geq 3$ and $d \geq 3$,
    there exists $u \in \mathbb{R}^{m \times d}$ and $(x_k \in \mathbb{R}^d)_{k \in \{0,1\}}$
    such that $u_0 = w_0$ and $u_1 = w_1$, and such that for any $a \in \mathbb{R}^{m \times k}$ there exists an open set
    $\mathcal{U} \subseteq \Theta$ around $(u,a) \in \Theta$ such that for all $(u', a') \in \mathcal{U}$ it holds
    for all $j \in [m]$ and $k \in \{0,1\}$ that if $j \neq k$ then $\sigma(u'_j \cdot x_k) = 0$.
    For shortness in the following, we use for $s \in \{0,1\}$ the shorthand
    $\sigma_s := \sigma(w_s \cdot x_s) \in \mathbb{R}_+^*$.

    Let us choose $y \in [K]$ in a way that will simplify later technicalities.
    For any $\omega \in [K]$, define $c_\omega := (1/K) \sum_{i \in [K]} H_i(u,0) - H_\omega(u,0) \in \mathbb{R}^h$. Since $\sum_{\omega \in [K]} c_\omega / K = 0$,
    by convexity of $\phi$ it holds $\sum_{\omega \in [K]} \phi(c_\omega) / K \geq \phi( \sum_{\omega \in [K]} c_\omega / K) = \phi(0)$.
    In particular, there must exist $\omega \in [K]$ such that $\phi(c_\omega) \geq \phi(0)$.
    Define $y \in [K]$ to be such that $\phi(c_y) = \max_{\omega \in [K]} \phi(c_\omega) \geq \phi(0)$.

    Define the data distributions $\mathcal{D}_0 = \{ (x_0, y) \}$ and $\mathcal{D}_1 = \{ (x_1, y) \}$.
    By the equivalence assumption, there exists $(\lambda_s \in \mathbb{R}_+^*)_{s \in \{0,1\}}$ such that
    for all $s \in \{0,1\}$, it holds
    $\partial_\theta \Omega_+(\mathcal{D}_s; \cdot) = \lambda_s \, \partial_\theta A(\mathcal{D}_s; \cdot)$.

    Define $r_s : a \mapsto \partial_z \ell(y, f((u,a), x_s)) \in \mathbb{R}^K$.
    Observe ${f((u,a), x_s) = a_s \sigma(u_s \cdot x_s) = a_s \sigma_s \in \mathbb{R}^K}$,
    and thus $\partial_{a_j} r_s(a) = 0$ when $j \neq s$.
    Additionally, note that $\partial_{a_s} r_s(a) = \partial_z^2 \ell(y, a_s \sigma_s) \, \sigma_s \in \mathbb{R}^{K \times K}$.
    \linebreak[2]
    Observe that by definition $\Omega_+(\mathcal{D}_s, (u,a)) =\phi(r_s(a) \cdot H(u,a))$
    and by definition of the adversarial gap
    ${ A(\mathcal{D}_s, (u,a)) = \psi( r_s(a) \cdot \partial_x f((u,a), x_s)) = \psi(r_s(a) \cdot (a_s \otimes w_s)) }$.
    These expressions enable the computation of the follow derivatives for any $j \in [m]$:
    \begin{equation}\label{eq:short-derivative-computation}\tag{E1}\begin{aligned}
    \frac{ \partial \Omega_+ }{\partial a_j} (\mathcal{D}_s; (u,a))
    = \Bigl( \partial_{a_j} r_s(a) \cdot H(u,a) + r_s(a) \cdot \partial_{a_j} H(u,a) \Bigr)
    &\cdot \nabla \phi\bigl(r_s(a) \cdot H(u,a)\bigr)
    \\
    \frac{\partial A}{\partial a_j} (\mathcal{D}_s; (u,a))
    = \mathds{1}_{j=s} \, \Bigl(\partial_{a_s} r_s(a) \cdot (a_s \otimes w_s) + r_s(a) \otimes w_s \Bigr)
    &\cdot \nabla \psi\bigl(r_s(a) \cdot (a_s \otimes w_s)\bigr)
    \end{aligned}\end{equation}

    We are now ready to construct a point violating the alignment equality.
    Let ${v := (\mathds{1}_{i \neq y})_{i \in [K]} \in \mathbb{R}^K}$,
    and $\gamma := \log(1/2) \in \mathbb{R}$.
    Then, let ${a^0 := (\gamma \, v / \sigma_0, \gamma \, v / \sigma_1, 0, \ldots, 0) \in \mathbb{R}^{m \times K}}$
    and $\theta := (u, a^0) \in \Theta$.

    Write $p := \operatorname{softargmax}(\gamma \, v) \in \mathbb{R}_+^K$.
    Recall that $r_s(a^0) = \partial_u \ell(y, a^0_s \sigma_s) = \partial_u \ell(y, \gamma \, v)$
    therefore $r_s(a^0) = p - \mathds{1}_y$.
    Moreover, note that
    $\partial_{a_s} r_s(a^0) = \partial_u^2 \ell(y, a^0_s\sigma_s)\, \sigma_s
    = (\operatorname{diag}(p) - p p^T) \, \sigma_s \in \mathbb{R}^{K \times K}$.
    \linebreak[2]
    For all $j \in \{0,1\}$, let us decompose
    $\partial_{a_j} \Omega_+(\mathcal{D}_s; \theta) \in \mathbb{R}^K$
    into its left and right terms
    $$L^\Omega(j,s) := (\partial_{a_j} r_s(a^0) \cdot H(\theta) ) \cdot \nabla \phi(r_s(a^0) \cdot H(\theta)) \in \mathbb{R}^K$$
    $$R^\Omega(j,s) := (r_s(a^0) \cdot \partial_{a_j} H(\theta) )  \cdot \nabla \phi(r_s(a^0) \cdot H(\theta)) \in \mathbb{R}^K$$

    Note that $r_s(a^0) \cdot a^0_s = \sum_{i \in [k]} p_i a^0_{s,i} - a_{s,y}^0 = (\sum_{i \in [k] \setminus \{y\}} p_i ) \gamma / \sigma_s = (1 - p_y) \gamma / \sigma_s$.
    Moreover since $p_y < 1$ because $p$ is in the image of $\operatorname{softargmax}$, and $\gamma < 0$, it holds $r_s(a^0) \cdot a^0_s < 0$.

    Furthermore, since $\phi$ is convex, it holds for any $z \in \mathbb{R}^h$ that
    $\phi(0) \geq \phi(z) + \nabla \phi(z) \cdot(0 - z)$
    thus $- \nabla \phi(z) \cdot z \leq - (\phi(z) - \phi(0))$.
    Hence, using $z := r_s(a^0) \cdot H(\theta) =\sum_{i \in [K]} p_i H_i(\theta) - H_y(\theta)$,
    \begin{equation}\label{eq:lss-phi-gap}\tag{E2}\begin{aligned}
    \left[ L^\Omega(s,s) \right]_y &=
    \Bigl( p_y H_y(\theta) - p_y \sum_{i \in [K]} p_i H_i(\theta)\Bigr) \,\sigma_s \cdot
    \nabla \phi\Bigl( \sum_{i \in [K]} p_i H_i(\theta) - H_y(\theta) \Bigr)
    \\ &\leq - \sigma_s p_y (\phi(z) - \phi(0))
    \end{aligned}\end{equation}

    Let us show additionally that $\phi(z) - \phi(0) \geq 0$.
    For that matter, observe that $A(\mathcal{D}_s; (u, 0)) = \lVert r_s(0) \cdot (0 \otimes w_s) \rVert_* = \lVert 0 \rVert_* =0$ thus $(u,0) \in \argmin A(\mathcal{D}_s; \cdot)$.
    By \Proposition{aligned-optima}, this implies that $(u,0) \in \argmin \Omega_+(\mathcal{D}_s; \cdot)$. However $\Omega_+(\mathcal{D}_s; (u,0)) = \phi(c_y) \geq \phi(0)$
    by definition of $y \in [K]$.
    Thus $\phi(0) \leq \argmin \Omega_+(\mathcal{D}_s; \cdot) \leq \Omega_+(\mathcal{D}_s; \theta) = \phi(z)$.

    On the other hand for $j = s$, the expression of the gradient $\partial_{a_s} A(\mathcal{D}_s; \theta)$ can be simplified using a property of homogeneous functions:
    for any $\zeta \in \mathbb{R}^K$, it holds $\zeta \cdot \nabla \psi(\zeta) = \psi(\zeta)$
    (see \Lemma{homogeneous-trick}),
    which we can use with $\zeta = (r_s(a^0) \cdot a_s^0) \,w_s \in \mathbb{R}^d$.
    Thus

    \begin{equation}\label{eq:xi-def}\tag{E3}
    \xi := \frac{\partial A}{\partial a_s}(\mathcal{D}_s; (u,a^0))
    = \frac{\psi(r_s(a^0) \cdot (a_s^0 \otimes w_s))}{r_s(a^0) \cdot a_s^0} \Bigl( \partial_{a_s} r_s(a^0) \cdot a_s^0 + r_s(a^0)  \Bigr)
    \end{equation}

    Let us check that $\xi_y > 0$.
    By the previous computation, $r_s(a^0) \cdot a_s^0 < 0$
    and thus $\psi((r_s(a^0) \cdot a_s^0 ) \, w_s) = \lvert r_s(a^0) \cdot a_s^0\lvert \, \psi(w_s) > 0$
    by homogeneity of $\psi = \lVert \cdot \rVert_*$ and the fact that $w_s \neq 0$.
    It remains to compute
    $$ \begin{aligned}
    \left[ \partial_{a_s} r_s(a^0) \cdot a_s^0 + r_s(a^0) \right]_y
    &= \sigma_s p_y (a_{s,y}^0 - \sum_{i \in [K]} p_i \,a_{s,i}^0) + p_y - 1
    \\ &= - \gamma p_y ( 1 - p_y ) + p_y - 1
    = - (1 - p_y) (1 + \gamma \, p_y)
    < 0
    \end{aligned}$$

    where the second equality uses the definition of $a_{s,i}^0 = \gamma \, v_i / \sigma_s$
    and $v_i = \mathds{1}_{i \neq y}$, and the fact that $\sum_{i \in [K], i \neq y} p_i = 1 - p_y$.
    The last inequality uses $(1 + \gamma \, p_y) > 0$, obtained by computation using the definition $p_y = \exp(\gamma v_y) / \sum_{i \in [K]} \exp(\gamma v_i)
    = 1 / (1 + (K-1) \exp(\gamma)) = 1 / (1 + (K-1) / 2)$
    and $K \geq 2$, to ensure $p_y \leq 2/3$ thus
    $1 + \gamma p_y \geq 1 + \log(1/2) \cdot 2/3 \approx 0.5379 > 0$.

    Thus by considering signs of each factor in \eqref{eq:xi-def}, we get $\xi_y > 0$, and also $\xi \neq 0 \in \mathbb{R}^K$.

    By definition $ R^\Omega(s,s) + L^\Omega(s,s) = \partial_{a_s} \Omega_+(\mathcal{D}_s; \theta)$
    and moreover by the equivalence assumption,
    $ \partial_\theta \Omega_+(\mathcal{D}_s; \theta)= \lambda_s \partial_\theta A(\mathcal{D}_s; \theta)$
    thus $R^\Omega(s,s) + L^\Omega(s,s) = \lambda_s \partial_{a_s} A(\mathcal{D}_s; \theta) = \lambda_s \xi$. Therefore
    $$ \left[ R^\Omega(s,s) \right]_y = \lambda_s \,
    \xi_y % \left[ \frac{\partial A}{\partial a_s}(\mathcal{D}_s; (u,a)) \right]_y
    - \left[ L^\Omega(s,s) \right]_y
    \underset{(S1)}{>} - \left[ L^\Omega(s,s) \right]_y
    \underset{(S2)}{\geq}  \sigma_s p_y \bigl( \phi(z) - \phi(0) \bigr)
    \underset{(S3)}{\geq} 0 $$
    By the previous calculation of $\xi_y > 0$ for ($S1$), the bound in \eqref{eq:lss-phi-gap} for ($S2$)
    and the fact $\phi(z) \geq \phi(0)$ shown previously ($S3$).
    In particular, this implies that $[R^\Omega(s,s)]_y >0$ therefore $R^\Omega(s,s) \neq 0 \in \mathbb{R}^K$.

    We are now almost ready to compute the squared cosine to show that the equation is violated,
    by observing that our choice of $a^0$ in $\theta = (u, a^0)$ ensures that
    $\forall (j,s) \in \{0,1\}^2, \> R^\Omega(j,s) = R^\Omega(j,j)$.
    Indeed, by definition of $r_s$ and $a^0$, for all $s \in \{0,1\}$ it holds
    ${ r_s(a^0) = \partial_{u} \ell(y, a_s^0 \sigma_s) = \partial_u \ell(y, \gamma \, v) }$.
    Hence $r_s(a^0) = r_j(a^0)$ for all $(j,s) \in \{0,1\}^2$ even if $j \neq s$.
    Thus exchanging them in the definition
    ${ R^\Omega(j,s) = (r_s(a^0) \cdot \partial_{a_j} H(\theta)) \cdot \nabla \phi( r_s(a^0) \cdot H(\theta)) }$, we get that ${R^\Omega(j,s) = R^\Omega(j,j) }$.
    As shown before, for any $j \in \{0,1\}$ the diagonal term
    $R^\Omega(j,j)$ is non-null, and this property permits transferring it off-diagonal.
    Hence if $j \neq s$ we get
    ${\partial_{a_j} \Omega_+(\mathcal{D}_s; \theta) = L^\Omega(j,s) + R^\Omega(j,s) = R^\Omega(j,s) \neq 0}$
    because $R^\Omega(j,s) = R^\Omega(j,j) \neq 0$ and $L^\Omega(j,s) = 0$ as observed before.

    By using the PSD form
    $Q((u,a), (u', a')) := \sum_{j \in \{0,1\}} \sum_{i \in [k]} a_{j,i} a'_{j,i}$,
    we get by definition
    ${ \pnorm{Q}{\partial_\theta \Omega_+(\mathcal{D}_s; \theta)}^2
    % = \lVert \partial_{a_s} \Omega_+(\mathcal{D}_s; \theta) \rVert_2^2 + \lVert \partial_{a_j} \Omega_+(\mathcal{D}_s; \theta) \rVert_2^2 } $
    = \sum_{j \in \{0,1\}} \lVert \partial_{a_j} \Omega_+(\mathcal{D}_s; \theta) \rVert_2^2 } $
    and we can cancel off-diagonal ($j\neq s$) terms $\partial_{a_j} A(\mathcal{D}_s; \theta) = 0$
    to get
    $ { \pnorm{Q}{ \partial_\theta A(\mathcal{D}_s; \theta) }^2
    = \lVert \partial_{a_s} A(\mathcal{D}_s; \theta) \rVert_2^2}$.
    Using the Cauchy-Schwarz inequality,
    $$\begin{aligned} \langle \partial_\theta \Omega_+(\mathcal{D}_0; \theta), \partial_\theta A(\mathcal{D}_0; \theta) \rangle_Q
        &= \partial_{a_0} \Omega_+(\mathcal{D}_0; \theta) \cdot \partial_{a_0} A(\mathcal{D}_0; \theta)
        + \partial_{a_{1}} \Omega_+(\mathcal{D}_0; \theta) \cdot \partial_{a_{1}} A(\mathcal{D}_0; \theta)
        \\ &= \partial_{a_0} \Omega_+(\mathcal{D}_0; \theta) \cdot \partial_{a_0} A(\mathcal{D}_0; \theta) + 0
         \\ &\leq \lVert \partial_{a_0} \Omega_+(\mathcal{D}_0; \theta) \rVert_2 \cdot \lVert \partial_{a_0} A(\mathcal{D}_0; \theta) \rVert_2
    \end{aligned}$$
    Thus, using this inequality on the numerator and canceling the factors due to
    $\partial_{\theta} A(\mathcal{D}_0; \theta) = \xi \neq 0$,
    $$\begin{aligned}
        \frac{ \langle \partial_\theta \Omega_+(\mathcal{D}_0; \theta), \partial_\theta A(\mathcal{D}_0; \theta) \rangle_Q^2}
        {\lVert \partial_\theta \Omega_+(\mathcal{D}_0; \theta) \rVert_Q^2 \, \lVert \partial_\theta A(\mathcal{D}_0; \theta) \rVert_Q^2}
        \leq \frac{ \lVert \partial_{a_0} \Omega_+(\mathcal{D}_0; \theta) \rVert_2^2 }{ \lVert \partial_{a_0} \Omega_+(\mathcal{D}_0; \theta) \rVert_2^2 + \lVert \partial_{a_{1}} \Omega_+(\mathcal{D}_0; \theta) \rVert_2^2 }
        < 1
    \end{aligned}$$
    where the inequality follows from the fact that $\partial_{a_1} \Omega_+(\mathcal{D}_0; \theta) \neq 0$ as shown above, thus the alignment equality of \Proposition{cosine} is violated at $\theta$, and the proof by contradiction is concluded.
\end{proof}

\pagebreak
\section{Experimental setup and details}\label{appendix:experimental-details}

\subsection{AT-PGD sampling procedure}\label{appendix:experiment-at-pgd}

Samples are drawn from the CIFAR-10 dataset ({\small \texttt{torchvision.datasets.CIFAR10}}) and augmented using random crops and horizonal flips (respectively {\small \texttt{v2.RandomCrop(32, padding=4)}} and {\small \texttt{v2.RandomHorizontalFlip(p=0.5)}} using {\small \texttt{torch.transforms.v2}}).

After sampling $(x, y) \in \mathbb{R}^d \times [K]$, a 10-step gradient ascent
is applied to generate an adversarial sample for
$\mathcal{L}(\theta, \cdot, y) : \mathbb{R}^d \to \mathbb{R}$, as $x_{k+1} = P(x_k + \eta_0 \, g_k)$
where ${g_k = \operatorname{sgn}(\partial_x \mathcal{L}(\theta,x_k,y)) \in \{\pm 1\}^{d}}$
and $P$ is the projection to the convex set $\{ u \in \interval{0}{1}^d \mid \lVert u - x \rVert_\infty \leq \delta \}$.
We use initial uniform random noise, that is to say $x_0 = P(x + \delta \cdot \varepsilon)$ where $\varepsilon \in \mathbb{R}^d$ has independent identically distributed coordinates with $\varepsilon_i \sim \mathcal{U}(-1, +1)$.
The numerical values used are $\delta = 8 / 255 \approx 0.3137$ and stepsize $\eta_0 = 0.2 \times \delta$.
This ascent procedure is performed in parallel on batches of 64 samples.
This is strictly speaking closer to a multi-step FGSM (Fast Gradient Sign Method, see \citet{goodfellow2015explaining})
procedure, but has been referred to as AT-PGD since at least \citet[Section 2.1]{madry2018towards}, so we keep the historical naming convention.

\subsection{Two-layer networks on CIFAR-10 classification}\label{appendix:experiment-mlp}

For \Figure{projected-2lp}, we use a two-layer architecture with biases, explicitly
$f : \Theta \times \mathbb{R}^d \to \mathbb{R}^K$ with parameter space
$\Theta = \mathbb{R}^{m \times d} \times \mathbb{R}^m \times \mathbb{R}^{m \times K} \times \mathbb{R}^K$
and $\sigma = \operatorname{ReLU}$ ({\small \texttt{torch.nn.functional.relu}})
$$ f : ( (w,b, a, c), x) \mapsto c + \frac{1}{\sqrt{m}} \sum_{i \in [m]} a_i \, \sigma(w_i \cdot x + b_i) $$
Parameters $a \in \mathbb{R}^{m \times K}$ and $c \in \mathbb{R}^K$ are initialized as zero, and $w_{i,j} \sim \mathcal{N}(0,1)$ and $b_i \sim \mathcal{N}(0,1)$ for all $i \in [m]$ and $j \in [d]$.
The network is trained with AT-PGD samples on CIFAR-10 and SGD optimizer ({\small \texttt{torch.optim.SGD}}) with step size $\eta = 10^{-1}$ and momentum $\mu = 0.9$.
When freezing the first neuron to observe parallel slices, we multiply the stepsize applied to $a_0 \in \mathbb{R}^K$ by zero, and check numerically that it remains constant for the second part of training.
The first part (neuron not frozen) is run for 10 epochs with batch size 64, and the second part (frozen neuron) is run for an additional 40 epochs, with the same batch size using AT-PGD samples.
Optimizer parameters are zeroed-out after the first ten epochs, to ensure that no momentum persists for the second (frozen neuron) part.

\subsection{WideResNet-34-10 on CIFAR-10 classification}\label{appendix:experiment-wrn}

For Wide ResNets, we use the implementation of \href{https://github.com/RobustBench/robustbench}{https://github.com/RobustBench/robustbench}, with depth 34 and widening factor of 10, with the default initialization of the implementation provided.
We train networks with AT-PGD samples and SGD optimizer
({\small \texttt{torch.optim.SGD}})
with a cosine scheduler starting with step size $\eta_0 = 10^{-1}$ and ending at $\eta=10^{-3}$ after 100 epochs,
with a momentum parameter of $\mu = 0.9$.
When freezing the first neuron of the last layer ({\small \texttt{model.fc.weight}} with shape $10 \times 640$), we also reset
momentum optimizer parameters to ensure that no momentum persists across the freeze,
we multiply the learning rate of the first neuron by zero, and we check numerically that the parameters of this neuron remain constant between checkpoints.

On a single A100 GPU,
training with this procedure takes between four and five minutes per epoch, for a total
(20 epochs for \Fig{wideresnet-E10} and \Fig{wideresnet-E20}
and 100 for \Fig{wrn-cosine}) of nearly nine hours.
For \Figure{wideresnet-E10} and \Figure{wideresnet-E20}, each pixel requires a full pass on the training data, computing a single derivative with respect to the input,
taking approximately thirty to forty-five seconds. The total with two scans of $51 \times 51$ pixels adds up to about sixty hours.
Including training, validation, and earlier runs for these visualizations at different
resolution / scale and previous identical experiments on smaller ResNet architectures
brings the total of A100-hours used to the order of three hundred hours.

\end{document}